\def\eqref#1{equation~\ref{#1}}
\def\1{\bm{1}}
\DeclareMathAlphabet{\mathsfit}{\encodingdefault}{\sfdefault}{m}{sl}
\SetMathAlphabet{\mathsfit}{bold}{\encodingdefault}{\sfdefault}{bx}{n}
\DeclareMathOperator*{\argmin}{arg\,min}
\newtheorem{definition}{Definition}[section]
\newtheorem{axiom}{Axiom}[section]
\newtheorem{theorem}{Theorem}[section]
\tiny\color{gray},      
\icmltitlerunning{One Wave To Explain Them All: A Unifying Perspective On Feature Attribution}
\begin{document}

\twocolumn[
\icmltitle{One Wave To Explain Them All: A Unifying Perspective On Feature Attribution}



\icmlsetsymbol{equal}{*}

\begin{icmlauthorlist}
\icmlauthor{Gabriel Kasmi}{a,b}
\icmlauthor{Amandine Brunetto}{a}
\icmlauthor{Thomas Fel}{c}
\icmlauthor{Jayneel Parekh}{d}
\end{icmlauthorlist}

\icmlaffiliation{a}{Mines Paris - PSL University, Paris, France}
\icmlaffiliation{b}{RTE France, Paris La Défense, France}
\icmlaffiliation{c}{Kempner Institute, Harvard University, Cambridge, MA, United-States}
\icmlaffiliation{d}{ISIR, Sorbonne Université, Paris, France}

\icmlcorrespondingauthor{Gabriel Kasmi}{gabriel.kasmi[at]minesparis.psl.eu}

\icmlkeywords{Interpretability, Feature attribution, Wavelets, Images, Audio, 3D, Volumes}

\vskip 0.3in
]



\printAffiliationsAndNotice{}  

\begin{abstract}
Feature attribution methods aim to improve the transparency of deep neural networks by identifying the input features that influence a model's decision. 
Pixel-based heatmaps have become the standard for attributing features to high-dimensional inputs, such as images, audio representations, and volumes. 
While intuitive and convenient, these pixel-based attributions fail to capture the underlying structure of the data. 
Moreover, the choice of domain for computing attributions has often been overlooked. 
This work demonstrates that the wavelet domain allows for informative and meaningful attributions. 
It handles any input dimension and offers a unified approach to feature attribution. 
Our method, the \textbf{W}avelet \textbf{A}ttribution \textbf{M}ethod (\wam), leverages the spatial and scale-localized properties of wavelet coefficients to provide explanations that capture both the \textit{where} and \textit{what} of a model's decision-making process. 
We show that \wam~quantitatively matches or outperforms existing gradient-based methods across multiple modalities, including audio, images, and volumes. 
Additionally, we discuss how \wam~bridges attribution with broader aspects of model robustness and transparency. 
Project page: \url{https://gabrielkasmi.github.io/wam/}.
\end{abstract}

\section{Introduction}

\begin{figure*}[ht]
    \centering
    \includegraphics[width=\textwidth]{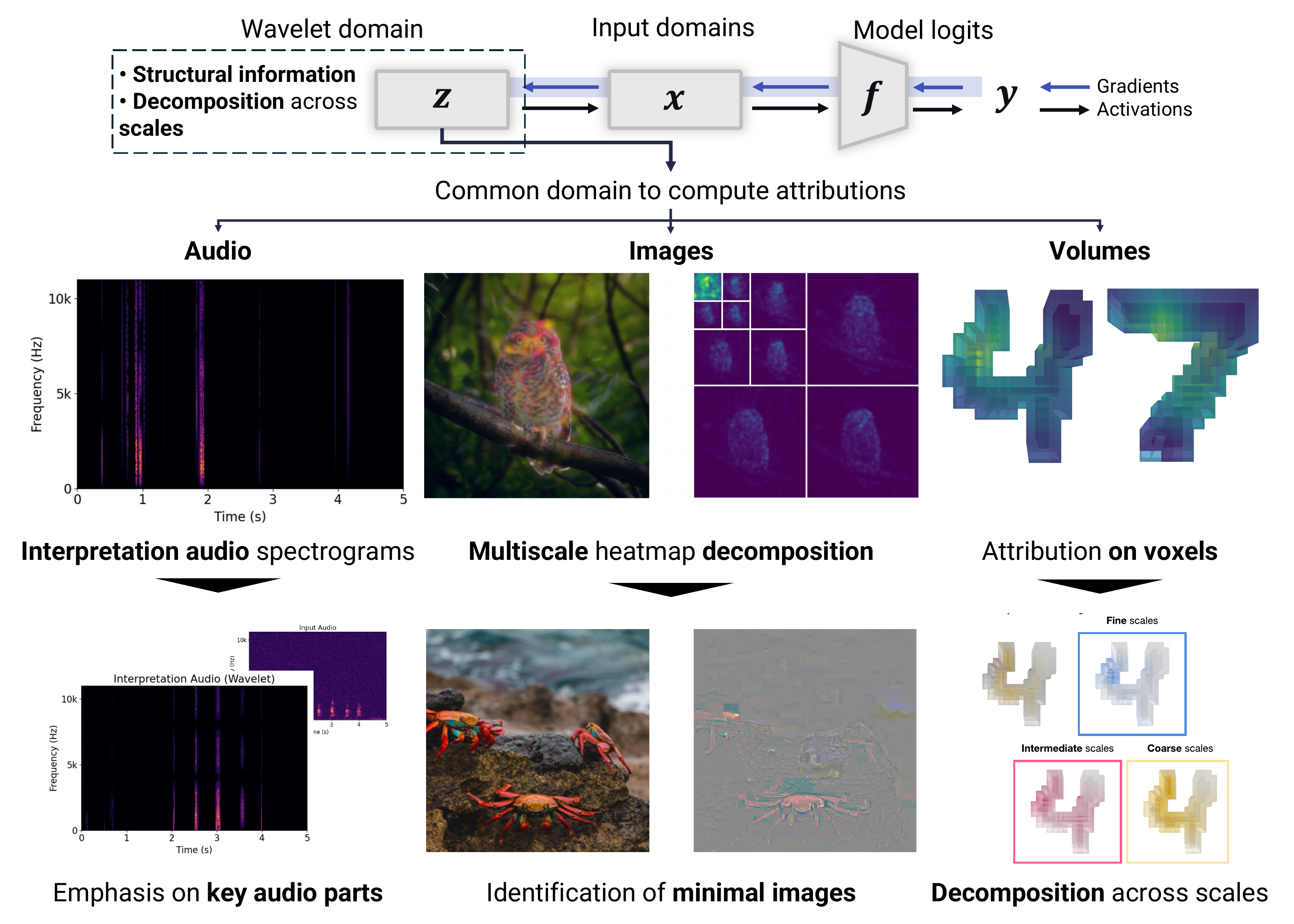}
    \caption{{\bf Explaining any input modality by decomposing the model's decision in the wavelet domain.} \wam~computes the gradient of the model's prediction with respect to the wavelet coefficients of the input modality (audio, image, volume). Unlike pixels, wavelet coefficients preserve structural information about the input signal, offering deeper insights into the model's behavior and going beyond \textit{where} it focuses.
    }
    \label{fig:main-flowchart}
\end{figure*}

Deep neural networks are increasingly being deployed in various applications. However, their opacity poses a significant challenge, especially in safety-critical domains, where understanding the decision-making process is crucial, for instance, in tumor detection \cite{pooch2020can} or obstacle identification \cite{sun2022shift}. 

This opacity motivated the rise of Explainable Artificial Intelligence (XAI) techniques to provide human-understandable explanations of model decisions. While XAI has been predominantly applied in image classification, it is also extending into other fields, such as audio and volumes classification \citep{parekh2023flexible,pmlr-v235-paissan24a,chen2021sparse,zheng2019pointcloud}. Currently, the most popular XAI methods are feature attribution methods \cite{shrikumar2017learning,sundararajan2017axiomatic,smilkov2017smoothgrad,fel2021sobol,novello2022making,muzellec2023gradient} which consist in generating saliency maps, i.e., heatmaps highlighting the important pixels on the input image \cite{zeiler2014visualizing}. The same principle has been expanded for audio and volumes, where most feature attributions are computed on 2D projections of the input modality. However, the pixel domain does not inherently capture the structural properties of signals. So far, existing works have overlooked the question of the domain in which feature attribution is computed. 
However, signal processing theory states that meaningful features often exist at multiple scales and are better represented in transformed domains that account for spatial and frequency information. This underlines the need for an alternative representations that preserves these properties to provide more faithful explanations.

Wavelets offer a hierarchical decomposition that retains both spatial and frequency information, unlike pixel-based methods that lose structural context. This makes wavelets a stronger foundation for interpreting model decisions across diverse modalities (or signals), as wavelets are inherently low-level features defined across various signal dimensions.
This work introduces the {\bf W}avelet {\bf A}ttribution {\bf M}ethod (\wam), which leverages wavelet coefficients as the basis features for computing attributions. Therefore, \wam~provides a unification of the feature attribution domain on components (wavelet coefficients) that are interpretable as low-level features of the input signal. We expand popular feature attribution methods, namely SmoothGrad \citep{smilkov2017smoothgrad} and Integrated Gradients \citep{sundararajan2017axiomatic}, within the wavelet domain, thus providing a generalization of feature attribution to any square-integrable modality. \wam~preserves the original structure of the data without converting it into a 2D representation. 

\Cref{fig:main-flowchart} illustrates our approach, which computes the gradient of a classification model with respect to the wavelet coefficients of the input signal. These coefficients are defined for any square-integrable function, regardless of dimension, and are localized in both space and scale, allowing them to capture information across different scales. 
For images and volumes, wavelet coefficients capture shapes, edges, and textures, while for audio signals, they represent transient patterns. Consequently, feature attribution in the wavelet domain enables us to interpret audio, highlight important areas and patterns in images, or identify key regions in volumes, all in a post-hoc manner, i.e., using only a trained model and a single backward pass.

We demonstrate the superiority of \wam~through extensive empirical evaluations across a diverse set of metrics, model architectures, and datasets, showing its robustness and versatility. We also discuss the novel insights and connections that our method permits, such as connecting feature attribution and the characterization of a model's robustness or filtering the important part of an audio signal without requiring a to train an explanation method. 

In summary, the following are the key contributions of this work:
\begin{itemize}
\item  We introduce \wam, a novel feature attribution method that operates in the wavelet domain, ensuring that the attributions align with the intrinsic properties of the data, whether it be audio, images, or volumes.
\item We demonstrate the effectiveness of \wam~through extensive empirical evaluations across diverse models, datasets, and performance metrics.
\item We illustrate the relevance of transitioning to the wavelet domain for feature attribution by showcasing how \wam~offers novel insights into model transparency.
\end{itemize}
\newpage

\section{Related works}
\subsection{Feature attribution methods}

\paragraph{Images.}

Computer vision has supported the development of numerous post-hoc feature attribution methods \citep{baehrens2010explain}. These techniques are applied to a trained model and estimate the importance of each pixel or region of an image based on its contribution to the model's prediction, i.e., estimate a saliency map. Post-hoc methods either leverage internal model information, such as gradients 
\cite{baehrens2010explain,simonyan2014deep,zeiler2014visualizing,springenberg2014striving, sundararajan2017axiomatic, smilkov2017smoothgrad,muzellec2023gradient,han2024respect,binder2016layer} or apply perturbations to the input space \cite{lundberg2017unified,ribeiro2016lime,petsiuk2018rise,fel2021sobol,novello2022making,kasmi2023assessment}. 

\paragraph{Audio.}
The methods introduced in computer vision have been extended to audio classification in three directions. The first one explores the use of saliency methods to highlight key features for audio classifiers processing spectrograms \citep{becker2024audiomnist,won2019toward} or waveforms \citep{muckenhirn2019understanding}. The second direction involves variants of LIME \citep{ribeiro2016lime} algorithm, proposed for different types of audio classification tasks \citep{mishra2017local,mishra2020reliable, haunschmid2020audiolime,chowdhury2021tracing,wullenweber2022coughlime}. The third pursues the development of methods to generate listenable interpretations for audio classifiers by leveraging the hidden representations \citep{parekh2022listen,pmlr-v235-paissan24a}.

\paragraph{Volumes.}
3D data can be represented as point clouds or voxels.
Point clouds offer an exact representation of the data but are unstructured. Voxels, conversely, are a discretized but structured representation of the data, making them suitable for processing with techniques such as 3D convolutions. Most explainability techniques for 3D data focused on explaining point clouds. \citet{chen2021sparse,schinagl2022occam,gupta20203d,zheng2019pointcloud} introduced techniques to generate visual explanations for interpretability of 3D object detection and classification networks. They highlight critical features in point cloud by adapting 2D image-based saliency techniques \citep{gupta20203d,zheng2019pointcloud}, by using a perturbation-based approach \citep{schinagl2022occam} or by proposing a 3D variant of LIME \citep{tan2022surrogate}. 
Explainability on volumes remains limited. A few works \citep{yang2018visual,mamalakis20233d,gotkowski2021m3d} have proposed attention maps on 2D slices of 3D medical scans using 3D-GradCAM.

Current feature attribution methods rely on a feature space that overlooks the inherent temporal, spatial, or geometric relationships within the data. Even worse, projecting explanations into the two-dimensional pixel domain for 1D audio or 3D volumes further distorts these relationships, resulting in explanations that fail to capture the full complexity of the model’s decision-making process, ultimately reducing the relevance of the attributions.

\subsection{Wavelet-based explainability} 



From an explainability perspective, we argue that the wavelet domain is more informative for feature attribution than the pixel domain, as it enables the extraction of interpretable low-level features from input signals. In particular, the dyadic wavelet decomposition of a signal isolates distinct patterns within the input signal. In images, wavelet coefficients capture edges, textures, and patterns at different scales and orientations, aligning with human perception. For instance, wavelets coincide in some cases with Gabor filters \citep{gabor1046theory}, which have been widely used for pattern analysis. In audio, they separate transient components from slowly varying structures, facilitating the identification of speech formats or sudden bursts of sound. 

Surprisingly, hardly a handful of works explored the use of wavelets for post-hoc explainability. CartoonX \cite{kolek2022cartoon} and ShearletX \cite{kolek2023explaining} extended the Meaningful Perturbation framework \cite{fong2017meaningful} into the wavelet and shearlet domains, respectively, allowing to highlight not just salient pixel regions but also relevant textures and edge structures. Recently, the Wavelet Scale Attribution Method (WCAM, \citealp{kasmi2023assessment}), transposed Sobol-based feature attribution \cite{fel2021sobol} to the wavelet domain, demonstrating the feasibility of multiscale attributions for images. Wavelet coefficients have been used to construct scattering transforms \cite{bruna2013invariant,anden2014deep}, i.e., fixed feature extractors that create a translation-invariant representation of an input, stable to deformations. This aligns with the properties of trained feature extractors like convolutional neural networks (CNNs) without any training.
This model and its descendants, such as the scattering spectra \cite{cheng2024scattering}, can be considered intrinsically interpretable models \cite{flora2022comparing} and have been applied in fields where model understanding is crucial, such as finance \cite{leonarduzzi2019maximum} and astrophysics \cite{cheng2021quantify}.

\section{Feature attribution in the wavelet domain}


\begin{figure*}[h]
    \centering
    \includegraphics[width=\linewidth]{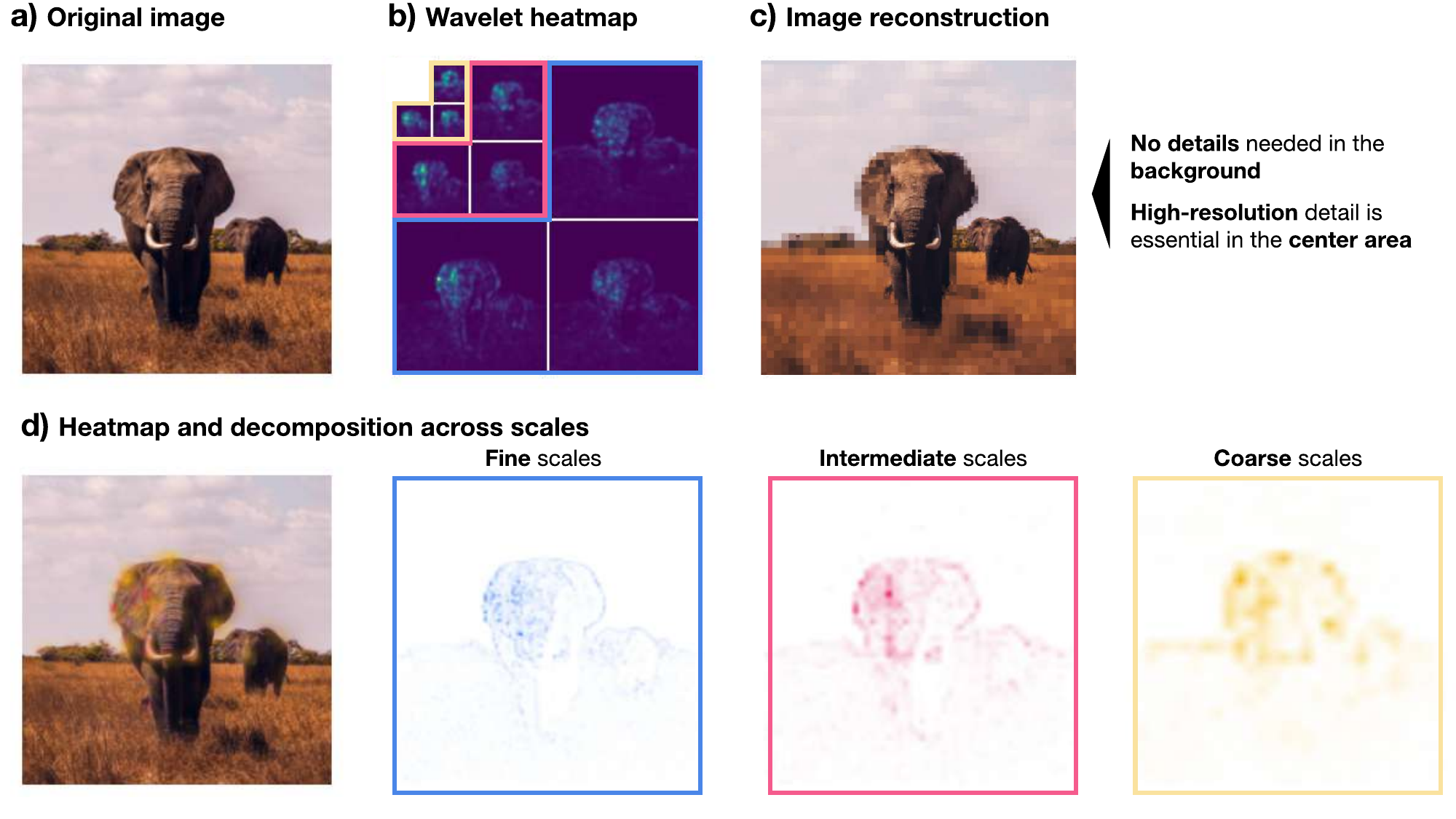}
    \caption{\textbf{\wam~for images.} Our method decomposes important components at multiple scales, revealing \textit{what} the model focuses on. The same spatial location encodes different structural elements at different scales: the model predicts the elephant because of its trunk and ears but needs both the fine texture details (blue), the intermediate-scale contours (red), and the coarse edges (yellow). Such a decomposition is not possible with traditional feature attribution in the pixel domain, and this hierarchical decomposition provides an interpretable view of model attributions.
}
    \label{fig:dyadic-example}
\end{figure*}

\paragraph{Notations.} Throughout, we let $\X = (\Omega, \F, \mu)$ be a measure space with set $\Omega$, $\sigma$-algebra $\F$, and measure $\mu$. $\H = L^2(\X, \mu)$ denotes the Hilbert space of square-integrable functions on $\X$. Let $\f \in \H$ represent a predictor function (e.g., a classifier), which maps an input $\x \in \X$ to an output $\f(\x) \in \Y$. We denote $\g\in \H$ a square-integrable function.

\subsection{Multiscale decompositions}

\paragraph{Overview.}  
Wavelets decompose signals into components that retain both spatial and frequency information, unlike Fourier transforms, which capture only frequency and discard spatial or temporal context. 
The choice of the wavelet determines what kind of structural components will be highlighted on the image. Its choice is therefore grounded in the application at hand. We refer the reader to \cref{sec:mother-wavelet} for a discussion of the choice of the mother wavelet and its impact on the interpretation of the \wam.

\paragraph{Definition.} 
A \emph{wavelet} is an integrable function $\wfunc \in \mathcal{H} = L^2(\mathbb{R}^n)$ that is normalized, centered at the origin, and has zero mean (i.e., $\int_{\mathbb{R}^n} \wfunc(\x) \, \mathrm{d}\x = 0$). In contrast to sine waves, which are localized in frequency but not in space, wavelets are localized in both \emph{space} and \emph{frequency} domains. To analyze a function or a signal $\g \in \mathcal{H}$, we define a family of functions $\mathcal{D}$ obtained by dilating and translating a mother wavelet $\wfunc$:
\begin{equation}
\D = \left\{
    \wfunc_{\scale, \translation}(\x) = \frac{1}{\scale^{n/2}} \wfunc\left(\frac{\x - \translation}{\scale}\right)
    \right\}_{\translation \in \mathbb{R}^n,\, \scale > 0},
\end{equation}
where $\translation$ is a translation and $\scale$ is a scale factor. This family gives rise to the continuous wavelet transform (CWT) of $\g$, defined as
\begin{align}
\W(\g)(\scale, \translation) 
&= \langle \g, \wfunc_{\scale, \translation} \rangle \notag \\
&= \int_{\mathbb{R}^n} 
\g(\x) \frac{1}{\scale^{n/2}} 
\overline{\wfunc}\left(\frac{\x - \translation}{\scale}\right) 
\mathrm{d}\x.
\end{align}

This operation can be interpreted as a convolution with a time-reversed and dilated version of the complex-conjugate $\overline{\wfunc}$ of $\wfunc$ (i.e., a linear filtering operation), and thus defines a linear operator on $\mathcal{H}$~\citep{mallat1999wavelet}.

\paragraph{Discrete Wavelet Transform.}
When the scale parameter is restricted to dyadic values $\Scales = \{2^j : j \in \mathbb{Z}\}$ and translations lie on a discrete grid, the continuous family becomes a countable set. Under appropriate admissibility and regularity conditions on $\wfunc$, this discrete family forms an orthonormal basis or a tight frame for $\mathcal{H}$, giving rise to the \emph{Discrete Wavelet Transform} (DWT).

The DWT is defined by projecting a signal $\g \in L^2(\mathbb{R}^n)$ onto a dyadic family of wavelets obtained by discretizing the scale and translation parameters. For dyadic scales $\scale = 2^j$ and integer translations $\translation = 2^j k$, we define the family
\begin{equation}
    \left\{\wfunc_{j,k}(\x) = 2^{j n/2} \wfunc(2^j \x - k)\right\}_{j, k \in \mathbb{Z}}.
\end{equation}
The DWT coefficients are given by the inner products
\begin{equation}
    \W_{\text{DWT}}(\g)(j, k) = \langle \g, \wfunc_{j,k} \rangle
    = \int_{\mathbb{R}^n} \g(\x) \overline{\wfunc_{j,k}(\x)} \, \mathrm{d}\x.
\end{equation}
These coefficients represent the content of $\g$ at scale $2^{-j}$ and position $2^{-j} k$. Under the appropriate admissibility and regularity conditions on $\wfunc$, the DWT is an invertible linear transform: the signal $\g$ can be exactly reconstructed from its wavelet coefficients $\W_{\text{DWT}}(\g)(j,k)$ via the inverse transform
\begin{equation}
    \g(\x) = \sum_{j \in \mathbb{Z}} \sum_{k \in \mathbb{Z}} \W_{\text{DWT}}(\g)(j, k) \, \wfunc_{j,k}(\x).
\end{equation}

\citet{mallat1989theory} showed that the dyadic wavelet transform of a signal $\g$ can be computed by applying a high-pass filter \( h \), followed by downsampling by a factor of two, to obtain the \emph{detail} coefficients, and similarly applying a low-pass filter \( g \), followed by downsampling, to obtain the \emph{approximation} coefficients. Iteratively applying this to the approximation coefficients yields a multilevel transform, where the $j^{\text{th}}$ level captures information at scales corresponding to frequency bands between $2^j$ and $2^{j-1}$ octaves. For input signals $\g$ of dimension greater than one, the detail coefficients can be decomposed into directional components. In the 2D case (e.g., images), these typically correspond to vertical, horizontal, and diagonal orientations.

\subsection{Feature attribution in the wavelet domain}

\paragraph{Problem formalization.} Let $\f$ be a classifier and $\x$ an input (e.g., an image, an audio, or a volume). The classifier $\f$ maps the input to a class $c$ as $\displaystyle{\y_c = \arg\max_{c \in \mathcal{C}} \f(\x)\equiv \f_c(\x)}$ with a slight abuse of notation. We recall that the original saliency map of the classifier $\f$ for class $c$ is then given by $\explainer_{\text{Sa}}(\x) = |\nabla_{\x} \f_c(\x)|$ where $c$ denotes the class of interest. The saliency map is defined under the condition that the $\f_c$'s are piecewise differentiable \citep{simonyan2014deep}. It highlights the most influential (in terms of the absolute value of the gradient) components in the input $\x$ for determining the model's $\f$ decision. The higher the value, the greater the importance of the corresponding region. 

However, moving from one pixel to the next is only a shift in the spatial domain and does not capture relationships between scales or frequencies. Therefore, we argue that the pixel domain is not well suited for explaining {\it what} the model is seeing on the image. On the other hand, the wavelet decomposition of an image -- and more broadly of any differentiable modality -- provides information on the structural components of the modality. Therefore, computing the gradient of $\f$ with respect to the wavelet transform of $\x$ will enable us to understand the model's reliance on features such as textures, edges, or shapes in the case of images and volumes and transients, or harmonics in sounds.




Denoting $\z =\W(\x)$ the discrete wavelet transform of $\x$, since $\W$ is invertible, we can define the saliency map in the wavelet domain as
\begin{equation}\label{eq:saliency_wavelet}
    \explainer_{\text{Sa}}(\z)=\left\vert
    \frac{\partial \f_c(\x)}{\partial \z}
    \right\vert=\left\vert
    \frac{\partial \f_c(\x)}{\partial \x } \cdot\frac{\partial \mathcal{W}^{-1}(\z)}{\partial \z}
    \right\vert,
\end{equation}
using the fact that  $\x=\mathcal{W}^{-1}(\z)$ and where $\displaystyle{\frac{\partial \f_c(x)}{\partial \x }}$ denotes the gradient of the classifier output with respect to the input image and $\displaystyle{\frac{\mathcal{W}^{-1}(\z)}{\partial \z}}$ is the Jacobian matrix of the inverse wavelet transform. 
In practice, to retrieve \Cref{eq:saliency_wavelet}, we require the gradients on $\mathcal{W}(\x)$ and directly evaluate $\displaystyle{\frac{
\partial \f_c(\mathcal{W}^{-1}(\z))}{\partial \z
}}$. 

A remarkable property of this framework is that it {\bf accommodates any input dimension}, and thus is {\bf modality-agnostic}. Therefore, we can apply it  -- and leverage its properties -- to numerical signals such as audio (1D signals), images (2D signals), or volumes (3D signals). In this paper, we demonstrate the superiority of this method in the 1D, 2D and 3D settings compared to other domain-specific methods.

\paragraph{Smoothing.} Saliency maps computed following \Cref{eq:saliency_wavelet} can fluctuate sharply at small scales as $\f_c$ is not continuously differentiable. To yield smoother explanations, smoothing consists in averaging the explanation over a set of noisy samples \cite{smilkov2017smoothgrad}. Analogously, we propose to calculate
\begin{equation}\label{eq:smoothgrad}
    \explainer_{\text{SG}}(\z) 
    = \frac{1}{n}\sum_{i=1}^n \nabla_{\tilde{\z}} \f(\W^{-1}(\tilde{\z})), 
\end{equation}
where $\tilde{\z} = \mathcal{W}(\x + \bm{\delta})$ and $\bm{\delta} \sim \mathcal{N}(0, I\sigma^2)$.
The number of samples, $n$, needed to compute the approximation of the smoothed gradient and the standard deviation $\sigma^2$ are hyperparameters. To transpose this method to the wavelet domain, we add noise to the input before computing its wavelet transform. We refer to this method as \wamsg~throughout the rest of the paper. In \Cref{sec:smoothing_illustration}, we illustrate the enhancement of the quality of the explanation after applying the smoothing to the gradients as described in \Cref{eq:smoothgrad}.  

\paragraph{Path integration.} Another approach to derive smooth explanations from the model's gradients consists in averaging the gradient values along the path from a baseline state to the current value. The baseline state is often set to zero, representing the complete absence of features. This technique, Integrated Gradients \cite{sundararajan2017axiomatic}, satisfies two axioms, {\it sensitivity} and {\it implementation invariance}. Sensitivity states that ``{\it for every input and baseline that differ in one feature but have different predictions, then the differing feature should be given a non-zero attribution}'' and implementation invariance that ``{\it the attributions are always identical for two functionally equivalent networks}''. We adapt the Integrated Gradient method from the image domain to the wavelet domain. Denoting $\z=\W(\x)$, we evaluate
\begin{multline}
\label{eq:integratedgrad}
\explainer_{\text{IG}}=(\z-\z_0)\cdot \\ \int_0^1
\frac{\partial \f_c \left(\W^{-1}\left(\z_0+\alpha (\z - \z_0)\right)\right)}{\partial \z}
\mathrm{d}\alpha,
\end{multline}

where $\z_0$ denotes the baseline state of the wavelet decomposition of $\x$. We refer to this implementation of \wam~as \wamig.

\Cref{sec:smoothing_illustration} illustrates the enhancement of the quality of the explanation after applying the smoothing to the gradients as described in \Cref{eq:smoothgrad} and after integrating the gradients, as described in \Cref{eq:integratedgrad}. 

\paragraph{Theoretical properties.} \wam~extends feature attribution into the wavelet domain. 
The main focus of the paper is the proposal and the practical usage of \wam, however, it appears that \wamig~inherits from the theoretical properties of the vanilla integrated gradients method, and especially sensitivity. We refer the reader to \cref{sec:completeness} for a discussion on the theoretical aspects of \wam.

\section{Evaluation and applications}
\subsection{Quantitative evaluation}

\paragraph{Evaluation metrics.}

We evaluate \wam~against gradient-based methods, as they are more reliable and efficient than perturbation-based methods \cite{crabbe2023evaluating}. We evaluate the {\bf Faithfulness}, which we define as the difference between the {\bf Insertion} and  {\bf Deletion} scores, thereby following the definition proposed by \citet{muzellec2023gradient}. We refer the reader to \citet{chan_2022_comparative} for a discussion of the alternative definitions of Faithfulness. Faithfulness is effective in evaluating attribution methods \cite{samek2016evaluating,li2022fd}. Insertion and deletion \cite{petsiuk2018rise} have been widely used in XAI to evaluate the quality of feature attribution methods \citep{fong2017meaningful}. The deletion measures the evolution of the prediction probability when one incrementally removes features by replacing them with a baseline value according to their attribution score. Insertion consists in gradually inserting features into a baseline input. 

Given a model $\f$ and an explanation functional $\explainer$, the Faithfulness $F$ is given by 

\begin{equation}\label{eq:faithfulness}
    F(\f,\explainer)=\textrm{Ins}(\f,\explainer)-\textrm{Del}(\f,\explainer).
\end{equation}

We provide a detailed derivation of the Insertion and the Deletion scores in \Cref{sec:metric_ins_del_def}. They were initially defined in the context of images, but we propose to expand them to audio and volumes. 

\paragraph{Evaluation setting.}

To the best of our knowledge, no prior work has evaluated a feature attribution method across multiple input modalities. To address this gap, we designed a comprehensive evaluation framework spanning diverse datasets: ESC-50 \cite{piczak2015esc} for audio, ImageNet \cite{russakovsky2015imagenet} for images, and MedMNIST3D \cite{yang2023medmnist} for volumes. For each modality, we picked an arbitrary model and applied the same feature attribution methods to ensure consistency. We considered four feature attribution methods: SmoothGrad \cite{smilkov2017smoothgrad}, Saliency \cite{simonyan2014deep}, Integrated Gradients \cite{sundararajan2017axiomatic} and GradCAM \cite{selvaraju2017gradcam}. We use the Python library Captum \cite{kokhlikyan2020captum} for consistently implementing existing methods on our datasets for audio and images. For volumes, we integrated \wam~as an additional method within the evaluation framework LATEC \cite{klein2024latec}.

\paragraph{Results.}

\Cref{tab:main-quanti} presents the performance of \wam. Our method consistently outperforms traditional attribution methods, especially for images and volumes, while achieving competitive results for audio. We can see that \wamig~outperforms \wamsg. As further discussed in \Cref{sec:smoothing_illustration}, this improved performance arises because path integration captures inter-scale dependencies, revealing the relative importance of each scale in the model's prediction. Overall, these results highlight the relevance of using wavelets, rather than pixels, as the basis for feature attribution.

In \Cref{sec:additional-quanti}, we evaluate \wam~using additional datasets, metrics, and alternative model topologies, and compare its accuracy with more methods across all three input modalities. Our results further emphasize that \wam~consistently matches or surpasses the performance of existing methods.


\begin{table*}[t]
    \centering
    \caption{\textbf{Faithfulness (Faith), Insertion (Ins) and Deletion (Del) scores across modalities.} 
    Evaluations are conducted on 400 samples from ESC-50 (fold 1), 1,000 images from ImageNet's validation set and the full AdrenalMNIST3D test set (298 samples). \wam~outperforms all other methods on image and volume modalities for both Insertion and Deletion metrics, meaning that it is good at identifying which features influence the most the model's decision. For audio, it achieves state-of-the-art performance in Insertion and performs as well as other methods for Deletion and Faithfulness.
    Best results are \textbf{bolded} and second best \underline{underlined}.}
    
    \begin{tabular}{r c c c c c c c c c }
    \toprule
    & \multicolumn{3}{c}{{\bf Audio}} & \multicolumn{3}{c}{{\bf Images}} & \multicolumn{3}{c}{{\bf Volumes}} \\
    \midrule
    \textit{Model} & \multicolumn{3}{c}{\textit{ResNet}} & \multicolumn{3}{c}{\textit{EfficientNet}} & \multicolumn{3}{c}{\textit{3D Former}}\\
    \textit{Dataset} & \multicolumn{3}{c}{\textit{ESC-50}} & \multicolumn{3}{c}{\textit{ImageNet}} & \multicolumn{3}{c}{\textit{AdrenalMNIST3D}}\\
    \midrule
    & Ins ($\uparrow$)& Del ($\downarrow$) & Faith ($\uparrow$) & Ins($\uparrow$) & Del ($\downarrow$) & Faith ($\uparrow$) &Ins ($\uparrow$) & Del ($\downarrow$)& Faith ($\uparrow$)\\
    \midrule
    Integrated Gradients & 0.267 &{\bf 0.047} & {\bf 0.264} & 0.113 & 0.113 & 0.000 & 0.666 & 0.743 & -0.077\\
    SmoothGrad & 0.251 & \underline{0.067} & 0.184 & 0.129 & 0.119 & 0.010 & 0.680 & 0.731 & -0.051\\
    GradCAM & 0.274 &0.201& 0.072 & 0.364 & 0.303 &0.061 & 0.689 & 0.744 & -0.055\\
    Saliency & 0.220&0.154 &0.066 & 0.148 & 0.140 &0.008 & 0.751 & 0.742 & 0.009\\
    \cmidrule{2-10}
    \wamig~(ours)& \underline{0.436}&0.260& 0.176 & {\bf0.447} & {\bf 0.049} &{\bf 0.370} & {\bf 0.719} & {\bf 0.621} & {\bf 0.098}\\
    \wamsg~(ours) & {\bf 0.449}&0.252&\underline{0.197} & \underline{0.419} & \underline{0.097} &\underline{0.350} & \underline{0.718} & \underline{0.648} & \underline{0.070}\\
         \bottomrule
    \end{tabular}
    
    \label{tab:main-quanti}
\end{table*}

\subsection{Exploring how \wam~renews feature attribution}

Beyond its strong quantitative performance, \wam~also offers new qualitative insights by disentangling features at different scales. This section highlights three representative use cases. We provide more examples in \Cref{sec:appendix-additional}.

\paragraph{Images: bridging attribution and robustness.}
In computer vision, several works document a correlation between the reliance on low frequency components of the image and the robustness of the model to adversarial or natural corruptions \citep{zhang2022range,chen2022rethinking,wang2020high}. This property is grounded in the frequency or spectral bias of models \cite{rahaman2019spectral} and has been highlighted by filtering out frequency bands from the Fourier transform of the input images. As the Fourier magnitude discards spatial information, frequencies are removed in all the image. 

\wam~quantifies the importance of each scale in the final prediction. Scales in the wavelet domain correspond to dyadic frequency ranges in the Fourier domain. Therefore, by summing the importance of wavelet coefficients at a given scale, we can deduce the importance of the corresponding frequency range in the model's prediction. This estimation can be straightforwardly derived from the explanation and does not require multiple perturbations as done in previous works \cite{zhang2022range,chen2022rethinking}. As a result, \wam~can be used to evaluate the robustness of a model. 

To show this property, we compare a set of ResNet models. The baseline is trained using the standard empirical risk minimization (Vanilla ResNet). The others are trained using three adversarial training methods: ADV \cite{madry2018towards}, 
ADV-Fast \cite{Wong2020Fast}, ADV-Free \cite{shafahi2019adversarial}.


\Cref{fig:fcam-wcam-equivalence} averages the importance of each scale for 1,000 explanations computed from ImageNet. For each image, we normalize the importance to highlight the relative reliance on different scales. As expected, the vanilla ResNet relies more on fine scales (high frequencies) than the adversarially trained models. \wam~consistently retrieves existing results from the robustness literature while avoiding the need for complex experimental settings, thus paving the way for easier assessment of the robustness of a model.

\begin{figure}[h]
    \centering
    \includegraphics[width=\linewidth]{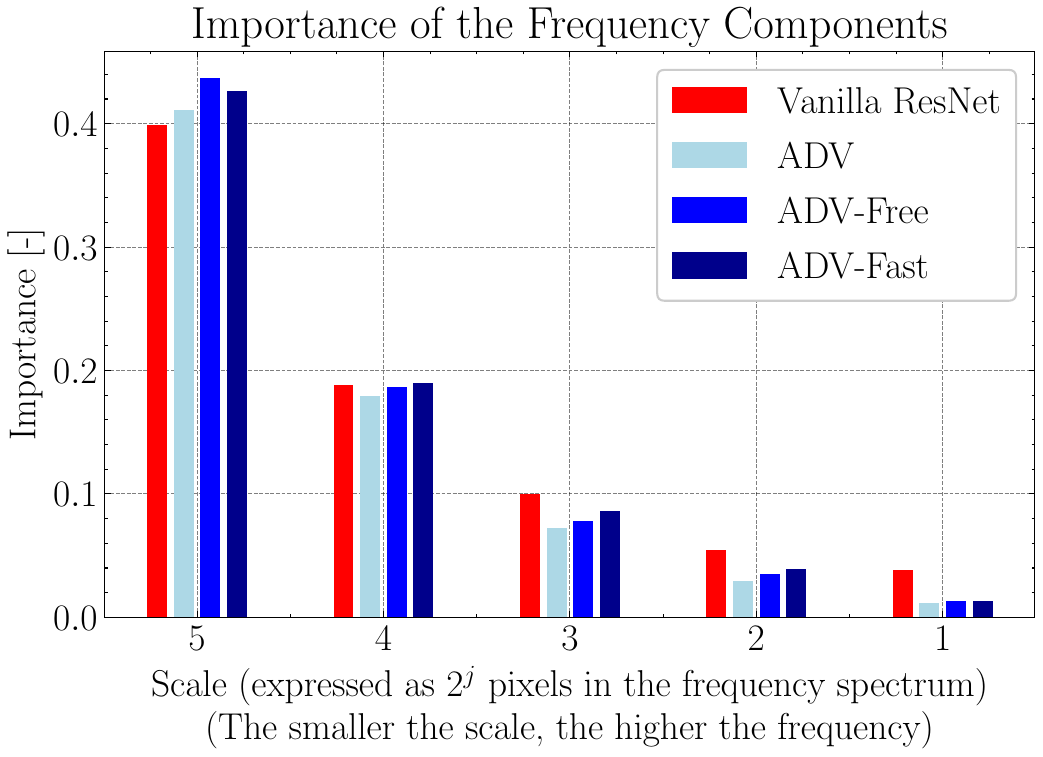}
    \vspace{-0.2cm}
    \caption{\textbf{Model robustness assessment with \wamig}. 
    Each bar indicates the importance of each scale in the model's prediction.
    Importances are averaged over 1,000 explanations computed from ImageNet and normalized.
    The adversarially robust models rely more on coarse-scale (low frequency) features than the vanilla ResNet, showing that \wam~recovers results from the literature.
    }
    \label{fig:fcam-wcam-equivalence}
\end{figure}

\paragraph{Volumes: revealing multi-scale structures.} 

We retrieve on voxels the same decomposition as for images or audio. \Cref{fig:comparison} highlights the significance of the edges at larger scales, whereas, at smaller scales, the importance becomes increasingly concentrated at the center of the digit. This multi-scale decomposition is particularly valuable for volumetric data, as it allows us to disentangle fine-grained details from broader structural patterns, which is crucial in many applications. For instance, in medical imaging, larger-scale features may correspond to organ boundaries or lesion contours, while finer scales capture subtle textural variations indicative of disease. Similarly, in 3D object recognition, coarse scales help identify overall shapes, whereas finer scales refine the details essential for distinguishing between similar objects. To the best of our knowledge, \wam~is the first method to demonstrate such a decomposition on shapes, offering a new perspective on how models process hierarchical spatial information in 3D data.

\begin{figure}[H]
    \centering
     \includegraphics[width=.9\linewidth]{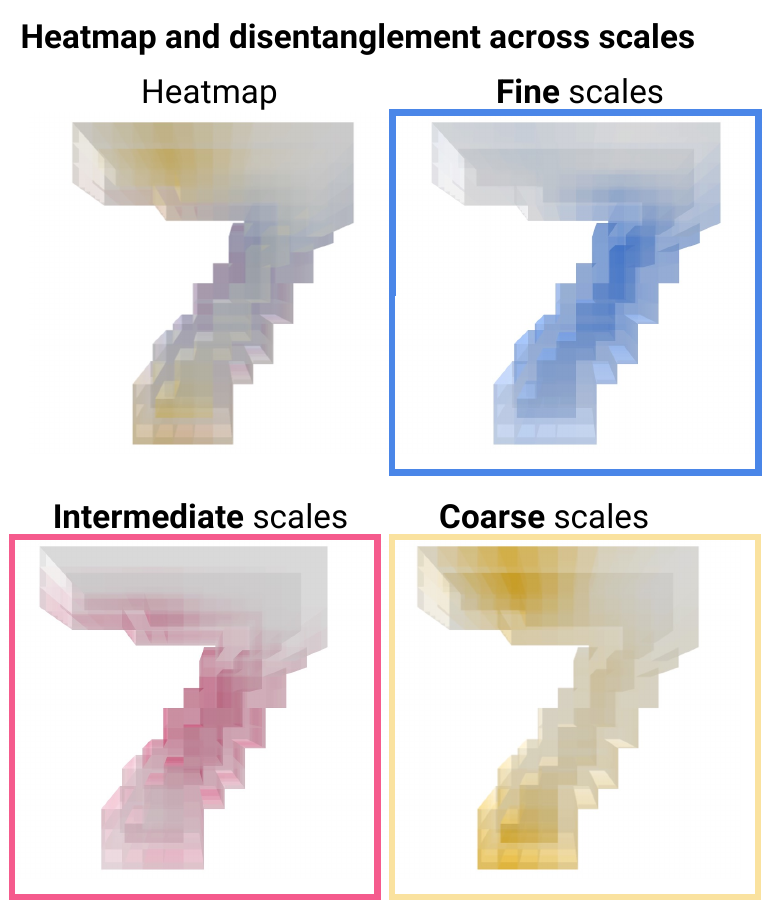}
        \caption{{\bf Multi-scale decomposition of feature importance on a volume using \wam}. Coarse scales (yellow) highlight the edges of the number, capturing its global structure. Fine scales (blue) focus on the digit's center, capturing high-frequency details and localized variations.
        }
    \label{fig:comparison}
\end{figure}

\paragraph{Audio: identifying key sound components.}

\Cref{fig:noise-experiment_main} qualitatively illustrates an application of \wam~for audio signals. We perform a noise experiment by adding 0~dB white noise to a target audio to form the input audio. The model's prediction remains unchanged after adding noise, indicating it continues to focus on the relevant parts of the input audio.
Identifying key parts of the audio input is crucial for improving model interpretability and ensuring it focuses on meaningful signal components. Traditionally, this task has been addressed using trained models such as Non-Negative Matrix Factorization (NMF, see for instance \citealp{parekh2022listen}), which requires training to extract relevant features and explain decisions. 

In contrast, \wam~performs this identification in a post-hoc manner, simplifying the task by directly highlighting the important audio components without the need for prior training. The interpretation audio in \Cref{fig:noise-experiment_main}, generated using top wavelet coefficients, provides further insights into the decision process and is consistent with the fact that the model has not been affected by the addition of noise. Therefore, \wam~effectively filters out the corrupted audio while highlighting the key parts of the audio.
Similarly, we discuss in \Cref{sec:appendix-audio-overlap} how \wam~also retrieves the key parts of an audio signal that has been corrupted by another source (overlap experiment).

\begin{figure}[h]
    \centering
    \includegraphics[width=.7\linewidth]{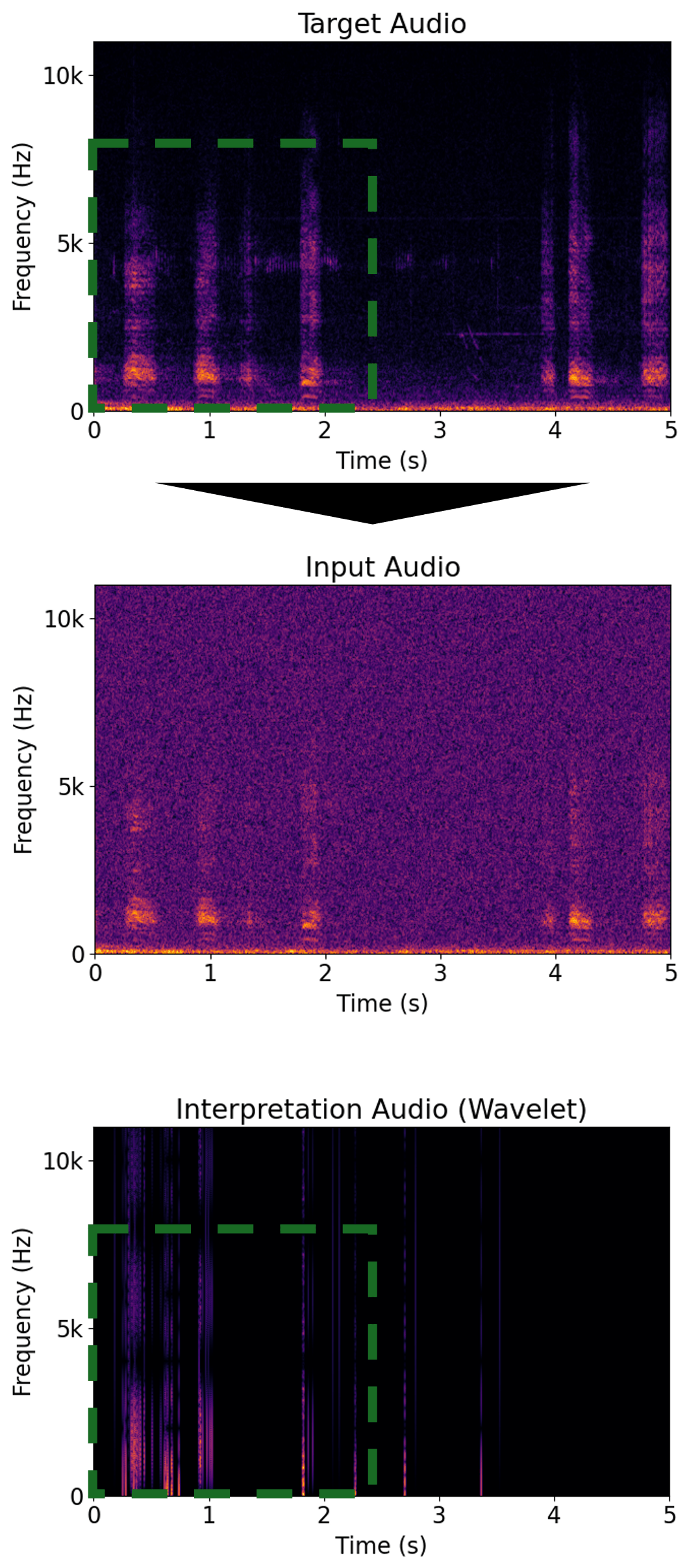}
    \caption{{\bf Qualitative illustration of \wam~for audio in a noise experiment}. We add 0~dB white noise to the audio of the target class (``Crow'') as input to the classifier. 
    Audio reconstructed using important wavelet coefficients effectively removes noise and highlights key parts of the target class (marked with a green box). \wam~identifies these relevant parts in a fully post-hoc manner.
    }
    \label{fig:noise-experiment_main}
\end{figure}

\section{Discussion}

\paragraph{Conclusion.} We bring a novel perspective on feature attribution by computing explanations in the wavelet domain rather than the pixel domain, providing a framework applicable to audio, images, and volumes. This method shifts away from traditional pixel-based decompositions used in saliency mapping, offering more precise insights into model decisions by leveraging the wavelet domain's ability to preserve inter-scale dependencies. This ensures that critical aspects like frequency and spatial structures are maintained, resulting in richer explanations compared to traditional feature attribution methods.

Our method, \wam, shows a strong ability to highlight essential audio components in noisy samples, isolate necessary volumes and texture features for accurate predictions, and offer richer explanations for shape classification. Quantitatively, it achieves state-of-the-art results across audio, image, and volume benchmarks.

\paragraph{Limitations \& future works.} Despite its advantages, the current method does not extend to point cloud data. For audio, the greedy extraction of important coefficients is unsuitable for generating listenable explanations. 

Future work could explore alternative wavelet decompositions, such as continuous or complex wavelets for audio explanations and graph wavelet transforms to handle unstructured point clouds. Additionally, our method could be applied to videos, mathematically similar to the 3D data used in this work. 

The unification of the feature attribution domain finds a natural application in explaining multi-modality models (e.g., Vision-Langage Models). Formally, the wavelet decomposition cannot be applied to text data, thus limiting the applicability of our method in natural language processing applications. Solutions to overcome this difficulty may be found by building on existing methods in this field, which have been surveyed by \citet{lyu2024towards}.

More broadly, we hope that this work will open the discussion on the properties and expressiveness of the domain in which feature attribution is carried out. In particular, the theoretical properties of expanding feature attribution beyond the pixel space could be further discussed.



\section*{Impact Statement}

This work addresses the growing need for grounded, interpretable AI systems, especially in light of increasing regulatory demands for transparency in machine learning. By providing a unified approach to explainability across diverse modalities (audio, image, and volumes) our method ensures that model explanations remain faithful to the inherent structure of the data. In parallel, as multimodal models become more prevalent, our work offers a solution that can be applied to any input type, facilitating more consistent and robust explanations.
\bibliography{references}
\bibliographystyle{icml2025}

\newpage
\appendix
\onecolumn

\section{Implementational details and consistency checks}

\subsection{Effect of smoothing and integration on the explanations.}\label{sec:smoothing_illustration}

\Cref{fig:smoothing} illustrates the improvement of the quality of the explanations by smoothing (third row) or integrating (fourth row) the gradients, compared to their raw values (second row). Smoothing follows \Cref{eq:smoothgrad} and integration follows \Cref{eq:integratedgrad}. We can see that both methods display complementary properties regarding the explanation.  \wamsg~enables to visualize highlights the important locations within scales, while  \wamig~emphasizes on the relative importance of each scale. This suggests that the variants of \wam~can be used for different applications. If one is interested in studying the inter-scale dependencies, the reliance on different frequency ranges, or more broadly, the robustness of the model, then \wamig~should be preferred. On the other hand, if one wants to focus on the important details at each scale, for instance, to highlight the important patterns for the prediction of a given class, then \wamsg~appears as more relevant.

\begin{figure}[h]
    \centering
    \includegraphics[width=\textwidth]{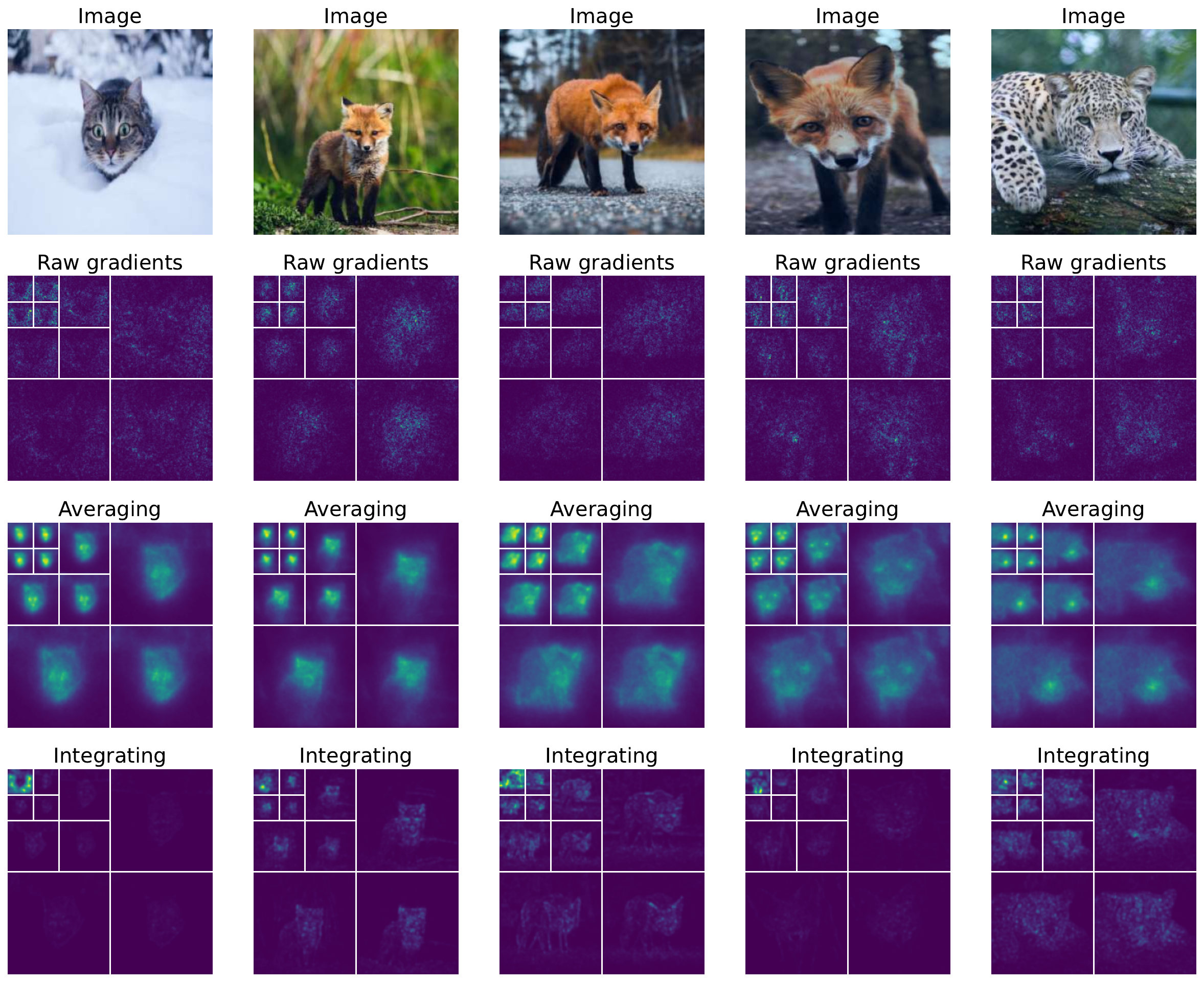}
    \caption{\textbf{Effect of averaging or smoothing} (3rd row) \textbf{and integration} (4th row) compared to the raw gradients (2nd) when computing the \wam~of the images (1st row). The explanations are depicted in the wavelet domain. "Smoothing" corresponds to the application of SmoothGrad \cite{smilkov2017smoothgrad} and "Integrating" to the application of Integrated Gradients \cite{sundararajan2017axiomatic}.}
    \label{fig:smoothing}
\end{figure}

\subsection{Effect of the mother wavelet}\label{sec:mother-wavelet} 

\paragraph{Definition.} The mother wavelet is the fundamental function $\wfunc\in L^2(\mathbb{R}^n)$ used to generate a family of wavelets through dilation and translation. 
This construction enables multiresolution analysis.
From a mother wavelet $\wfunc(\x)$, the family of function is defined as 
\begin{equation}
\psi_{\scale,\translation}(\x)=\frac{1}{\scale^{n/2}}\left(\frac{\x-\translation}{\scale}\right),
\end{equation}
where $\scale\in\mathbb{R}^n\setminus \{\mathbf{0}\}$ and $\translation\in\mathbb{R}^n$ are respectively the scale and translation parameters. The choice of the mother wavelet influences which features are captured, the quality of signal reconstruction, and the effectiveness of tasks like compression, denoising, and feature extraction. 


\paragraph{Interpretability.}

\wam~formally indicates that the model is sensitive to a given region in the space-scale domain. Since the reconstructed image is the same, no matter the chosen basis for decomposition, the interpretation in terms of structural components (i.e., what those coefficients correspond to) depends on the choice of the mother wavelet. The choice of the mother wavelet is therefore grounded in the application of interest, and the information that is invariant to the choice of the mother wavelet is the importance of of a given scale at a given location. 

Among the popular mother wavelet, the Haar wavelet \cite{haar1910theorie} enables us to interpret the sensitivity of the model in terms of sharp changes, edges, or discontinuities on the image. It can be used for instance to detect grid-like structures. Daubechies wavelets \cite{daubechies1988orthonormal} are smoother than Haar wavelets and highlight smooth patterns or fine textural details. Finally, the Biorthogonal (bior) wavelet \cite{cohen1992biorthogonal} underlines curves, contours, and symmetric structure, especially on images.

\Cref{fig:mother-wavelet-examples} illustrates the different structural features emphasized 
depending on the choice of the mother wavelet. In this example, we can see that the Haar wavelet decomposes the image in terms of sudden transitions, highlighting the edges of the input image. In contrast, using the Daubechies wavelet reveals more of the image textures. In \cref{sec:remote-sensing} we illustrate the usefulness of \wam~in a practical setting, where the chosen mother wavelet was the Haar wavelet, since the object of interest was to understand the model's behavior with respect to the detection of sharp objects on images. 

\begin{figure}[h]
    \centering
    \includegraphics[width=0.9\linewidth]{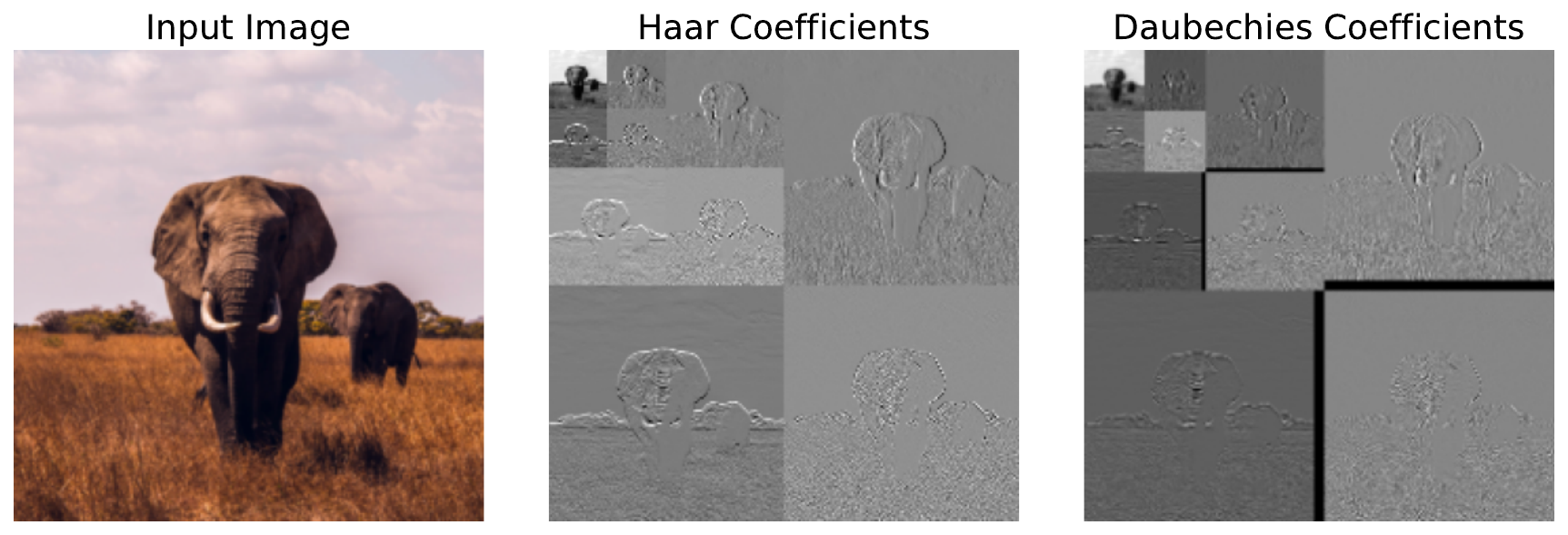}
    \caption{\textbf{Impact of the mother wavelet.} Different mother wavelets emphasize different features of the image.}
    \label{fig:mother-wavelet-examples}
\end{figure}

\paragraph{Quantitative checks.} 

\Cref{tab:mother_wavelet_shift} empirically checks that the explanations generated by \wam~are invariant to the choice of the mother wavelet. For images and volumes, the insertion and deletion scores remain unchanged. Results are reported up to the 3rd decimal, but results remain the same up to the 8th decimal. For audio, the numerical results are close but not identical. This slight variation arises because the reconstruction of the input signal depends on the choice of the mother wavelet. Consequently, different wavelets can lead to minor differences in the results. However, when the reconstructions align, the resulting explanations are identical.

\begin{table}[h]
    \centering
        \caption{\textbf{Impact of the mother wavelet.} Changing the mother wavelet does not affect the \wamig~Insertion and Deletion scores for images and volumes, but it causes slight variations for audio.}
    \begin{tabular}{r cc cc cc}
    \toprule
    & \multicolumn{2}{c}{{\bf Audio}} & \multicolumn{2}{c}{{\bf Images}} & \multicolumn{2}{c}{{\bf Volumes}} \\
    \midrule
     {\it Model}      & \multicolumn{2}{c}{{\it ResNet}} & \multicolumn{2}{c}{{\it ResNet}} & \multicolumn{2}{c}{{\it ResNet}}  \\
     {\it Dataset}    & \multicolumn{2}{c}{{\it ESC-50}} & \multicolumn{2}{c}{{\it ImageNet}} & \multicolumn{2}{c}{{\it AdrenalMNIST}} \\
     \midrule
     & {\it Insertion} & {\it Deletion} & {\it Insertion} & {\it Deletion} & {\it Insertion} & {\it Deletion}\\
     \midrule
     Bior & 0.442 & 0.256 & 0.422 & 0.078 & 0.404 & 0.631\\
     Daubechies & 0.446 & 0.266 & 0.422 & 0.078 & 0.404 & 0.631\\
     Haar (baseline) & 0.438 & 0.263 & 0.422 & 0.078 & 0.404 & 0.631\\
     \bottomrule
    \end{tabular}
    \label{tab:mother_wavelet_shift}
\end{table}



\newpage
\subsection{Randomization checks}

\paragraph{Motivation.} The sanity checks introduced by \citet{adebayo2018sanity} aim at assessing whether an explanation depends on the model's parameters and the input labels. These tests assess the faithfulness of an explanation beyond visual evaluation. The randomization test evaluates whether an explanation depends on the model's parameters. Parameters have a strong effect on a model's performance. Therefore, for a saliency method to be useful for debugging or analyzing a model, it should be sensitive to its parameters. \citet{adebayo2018sanity} proposed different methods to randomize the model parameters. One particularly interesting implementation is the ``cascading'' randomization, in which the weights are randomized from the top to the bottom layers.

 \paragraph{Method.} We compute \wam~for 1,000 ImageNet validation samples for a set of increasingly randomized models. A randomized layer is a layer that we reset at its initial value. We consider a ResNet-18 and randomize its layers from the shallowest {\tt conv1} to the deepest {\tt fc}. We then compute the rank correlation (or Pearson correlation coefficient, \citealp{pearson1896vii}) between the \wam~of the original, fully-trained model (labeled {\tt orig}) and the randomized models (labeled by the name of the layer until which they are randomized).

 \begin{figure}[h]
     \centering
     \includegraphics[width=.5\linewidth]{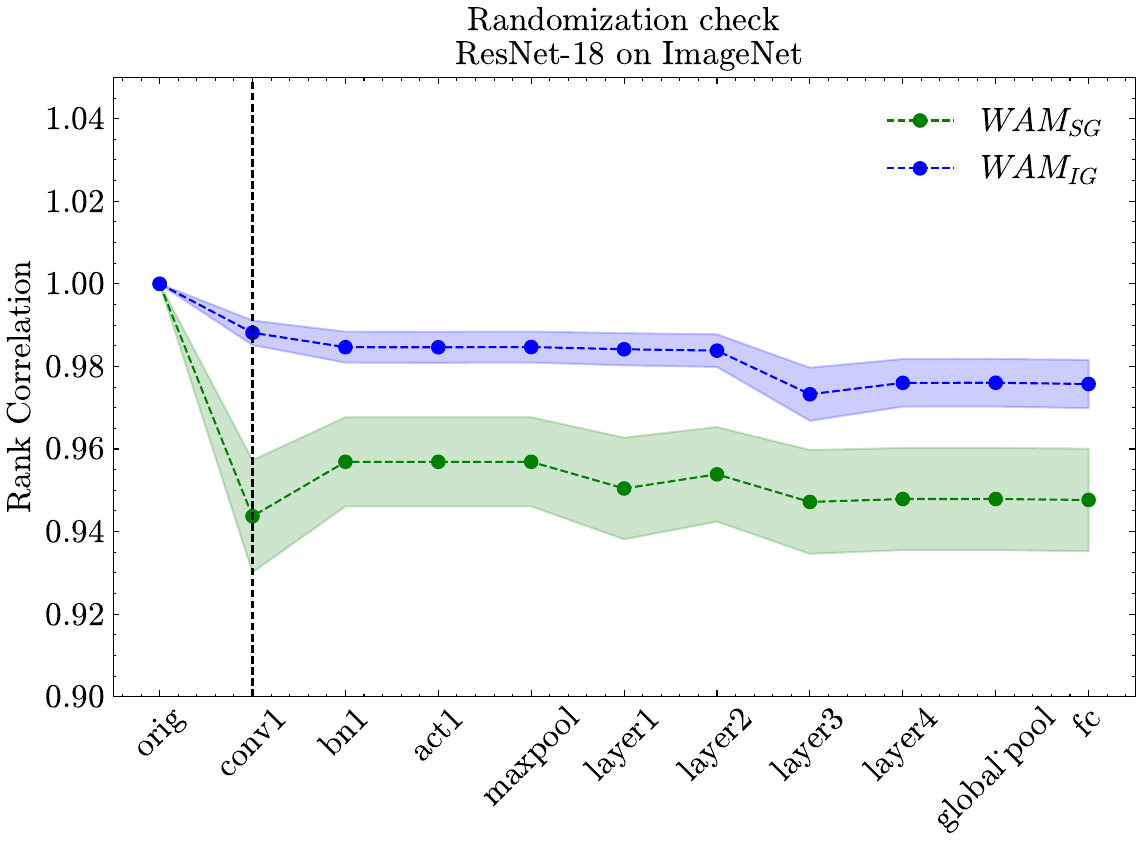}
     \caption{{\bf Cascading randomization} of  \wam~for explaining a ResNet-18 on ImageNet. The $y$ axis indicates the rank correlation between the original explanation and the explanation derived for randomization up that layer. The rank correlation is averaged over 1,000 ImageNet validation images.}
     \label{fig:randomization}
 \end{figure}

 \paragraph{Results.} \Cref{fig:randomization} presents the results. The dotted line represents the average rank correlation across the 1,000 images, and the intervals represent the 95\% confidence intervals. We can see that the correlation between  \wam~s significantly decreases as the randomization increases, thereby showing that \wam~is sensitive to the model's parameters. The lower decrease that we observe for  \wamig~compared to  \wamsg~comes from the fact that  \wamig~reflects more the inter-scales distribution of the importance than  \wamsg~does. As pointed out by \citet{rahaman2019spectral} and \citet{yin2019fourier}, even random models exhibit a spectral bias, i.e., a natural tendency to favor lower frequencies over higher ones, which translates here into the fact of naturally putting more importance on coarser scales rather than finer scales, no matter the depth of the randomization. \Cref{fig:sanitiy-illustration-1} qualitatively illustrates how the explanations evolve as the randomization depth increases.

\begin{figure}[h]
    \centering
    \includegraphics[width=.9\linewidth]{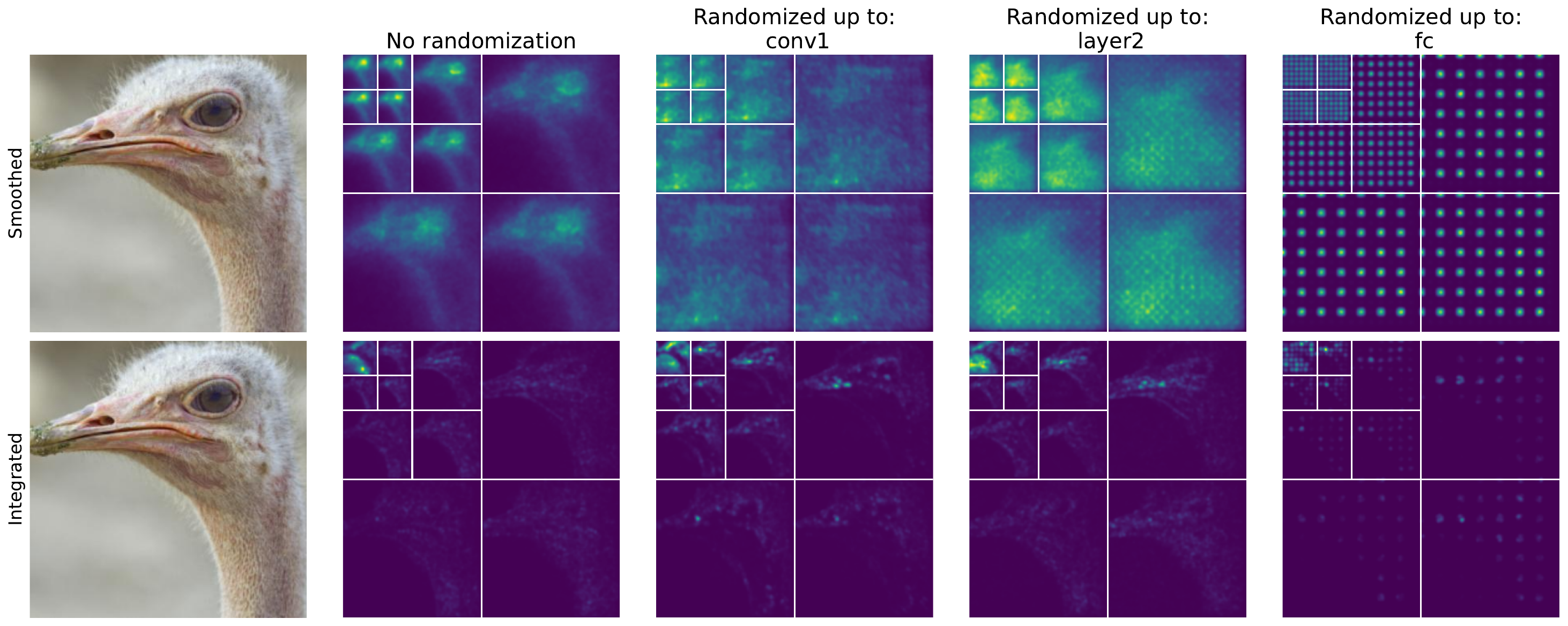}
    \caption{\textbf{Illustration of the randomization test as a cascade of randomizations of the layers of the classifier.} From left to right shows the explanation from  \wam~for an increasingly randomized ResNet-18.}
    \label{fig:sanitiy-illustration-1}
\end{figure}


\section{Theoretical properties}\label{sec:completeness}

\citet{sundararajan2017axiomatic} highlighted two axioms that feature attribution methods should satisfy, namely sensitivity and implementation invariance. We first recall the definitions of these axioms.

\begin{axiom}[Sensitivity (a)]
If two inputs $\x$ and $\x'$ differ in one feature and have different predictions, then the differing feature must receive a non-zero attribution.
\end{axiom}

\begin{axiom}[Implementation Invariance]
Two functionally equivalent networks (i.e., computing the same function) must yield the same attributions.
\end{axiom}

Since \wamig~builds on integrated gradients, we may wonder whether it inherits the properties of the original integrated gradients. In the following, let $\f_c: \mathbb{R}^d \rightarrow \mathbb{R}$ be the class logit function in input space $\W$ denotes the wavelet transform operator and $\W^{-1}$ its inverse. We define $\z = \W(\x)$ and let $\z_0$ be a fixed baseline in the wavelet space. We also define the composed function $f = \f_c \circ \W^{-1}$, such that $f(\z) = \f_c(\x)$. 

\subsection{\wamig~satisfies completeness}

We can show that under some conditions on the choice of the mother wavelet $\wfunc$, \wamig~satisfies completeness, and thus sensitivity. Following \citet{sundararajan2017axiomatic}, this implies that \wamig~ satisfies sensitivity (a).

\begin{definition}[Completeness]
An attribution method $\explainer$ satisfies \emph{completeness} if
\begin{equation}
\sum_{i=1}^n \explainer(\x)_i = F(\x) - F(\x_0).
\end{equation}
for any given $F:\mathbb{R}^n\to\mathbb{R}$ differentiable almost everywhere.
\end{definition}

\begin{theorem}
    \label{thm:completeness_wamig}
If $\W^{-1}$ is differentiable almost everywhere, then \wamig~satisfies completeness:
\begin{equation}
\sum_{i=1}^n \text{\wamig}_i(\z) 
= \f_c(\x) - \f_c(\x_0)
\end{equation}
\end{theorem}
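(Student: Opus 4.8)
The plan is to reduce the statement to the known completeness property of ordinary integrated gradients applied to the composed function $f = \f_c \circ \W^{-1}$, and then translate the resulting equality back through the change of variables $\z = \W(\x)$. First I would observe that \wamig, as defined in \Cref{eq:integratedgrad}, is exactly the integrated gradients attribution for the function $f$ in the wavelet space, computed along the straight-line path from the baseline $\z_0$ to $\z$:
\begin{equation}
\text{\wamig}(\z) = (\z - \z_0) \cdot \int_0^1 \frac{\partial f\left(\z_0 + \alpha(\z - \z_0)\right)}{\partial \z}\, \mathrm{d}\alpha .
\end{equation}
Summing over the coordinates $i$ and recognizing the integrand as the derivative of $\alpha \mapsto f(\z_0 + \alpha(\z-\z_0))$ via the chain rule, the sum telescopes to $f(\z) - f(\z_0)$ by the fundamental theorem of calculus. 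Finally, using $f(\z) = \f_c(\W^{-1}(\z)) = \f_c(\x)$ and $f(\z_0) = \f_c(\x_0)$ gives the claimed identity.

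The key steps, in order, are: (i) rewrite \wamig~in terms of $f$; (ii) interchange the finite sum $\sum_i$ with the integral $\int_0^1$ (legitimate since it is a finite sum); (iii) apply the chain rule to identify $\sum_i (\z-\z_0)_i \, \partial_i f(\z_0 + \alpha(\z-\z_0))$ with $\frac{\mathrm{d}}{\mathrm{d}\alpha} f(\z_0 + \alpha(\z-\z_0))$; (iv) integrate to obtain $f(\z) - f(\z_0)$; (v) unwind the definition of $f$. Steps (i), (ii), (iv), (v) are essentially bookkeeping.

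The main obstacle is step (iii), the differentiability/regularity bookkeeping: $\f_c$ is only piecewise differentiable (a ReLU network), so $f = \f_c \circ \W^{-1}$ is only differentiable almost everywhere, and the chain rule as well as the fundamental theorem of calculus must be justified along the specific path $\alpha \mapsto \z_0 + \alpha(\z-\z_0)$. This is precisely where the hypothesis that $\W^{-1}$ is differentiable almost everywhere enters: since $\W^{-1}$ is a linear (hence smooth) bijection, $f$ inherits the almost-everywhere differentiability and piecewise-linear structure of $\f_c$, so for generic endpoints the path avoids the measure-zero non-differentiability set except at finitely many breakpoints, and $\alpha \mapsto f(\z_0+\alpha(\z-\z_0))$ is absolutely continuous — which is exactly the condition under which \citet{sundararajan2017axiomatic} establish completeness. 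I would therefore phrase step (iii) by invoking their argument verbatim for the function $f$, noting that $\W^{-1}$ being a differentiable (indeed linear) map is what guarantees $f$ satisfies the same hypotheses as the functions considered there. The remaining subtlety — that completeness for $f$ in wavelet space transfers to an identity about $\f_c$ in pixel space — is immediate because $\W^{-1}$ is a genuine inverse, so no information is lost under the substitution.
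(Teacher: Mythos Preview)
Your proposal is correct and follows essentially the same route as the paper: recognize \wamig~as ordinary integrated gradients applied to the composed function $f=\f_c\circ\W^{-1}$, invoke completeness of IG for $f$ (you spell out the chain-rule/FTC argument, the paper simply cites Proposition~1 of \citet{sundararajan2017axiomatic}), and then unwind $f(\z)-f(\z_0)=\f_c(\x)-\f_c(\x_0)$ via invertibility of $\W$. The only cosmetic difference is that you give a more careful discussion of the almost-everywhere differentiability needed along the path, whereas the paper handles this in a single sentence.
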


\begin{proof}

Proposition 1 in \citet{sundararajan2017axiomatic} states that for some $f$ differentiable almost everywhere,
\[
\sum_{i=1}^n IG_i(\x)=f(\x)-f(\x_0)
\]
since by definition, \wamig~is IG applied to $f(\z)$, we have
\begin{equation}\tag{$\star$}
\label{eq:ig_completeness}
\sum_{i=1}^n \text{\wamig}_i(\z)=f(\z)-f(\z_0).
\end{equation}
Now taking $f=\f_c\circ\W^{-1}$, \cref{eq:ig_completeness} can be re-written as
\[
\begin{aligned}
\sum_{i=1}^n \text{\wamig}_i(\z)&=f(\z)-f(\z_0)   \\
& =\f_c(\W^{-1}(\z))-\f_c(\W^{-1}(\z_0))\\
& =\f_c(\x)-\f_c(\x_0)\\
\end{aligned}
\]
and $\f_c\circ\W^{-1}$ remains differentiable (almost everywhere) provided $\W^{-1}$ is differentiable. This can be enforced by choosing a mother wavelet $\wfunc$ that is smooth (e.g., with Daubechies but not with Haar wavelets). Thus, provided $\wfunc$ is smooth, \wamig~ satisfies completeness.    
\end{proof}

\subsection{Additional theoretical properties}

It is also possible to show that \wamig~satisfies implementation invariance and linearity. As with completeness, it is enough to show that the corresponding composed function $f$ ``respects'' implementation invariance or linearity of $\f_c$. And since $IG$ satisfies both these properties, \wamig~will satisfy as well as it is $\text{IG}$ applied to the corresponding $f$ functions. 

For implementation invariance, if $\f_c^{(1)}, \f_c^{(2)}$ have different architectures but exactly same output for any input $x$, the corresponding  composed functions $f_{(1)}, f_{(2)}$ also have the same output for any input $z$ and thus \wamig~outputs are exactly the same for any given input $x$ for $\f_c^{(1)}, \f_c^{(2)}$. 

For linearity, one needs to show that \wamig~output for a linear combination of any two functions $\f_c^{(3)} = a\f_c^{(1)} + b\f_c^{(2)}$ is linear combination of \wamig~output for each of them. Let $f_{(i)}$ be the associated composed function of $\f_c^{(i)}$. Then $f_{(3)}=af_{(1)} + bf_{(2)}$ and because of linearity of $IG$ applied on $f_{(i)}$'s, \wamig~satisfies linearity as well.  Thus, it's easy to see that \wamig~also satisfies implementation invariance and linearity axioms. More generally, we believe that \wamig~is a path method, and thus inherits the properties of the path methods proved by \citet{friedman2004paths}. However, the focus of the paper is on the proposal and practical usage of wavelet attribution methods. We keep any formal study of theoretical properties of \wamig~or other variants of \wam~as future work.

\subsection{Feature attribution beyond the input space}

More generally, our approach expands attribution methods on transformations of the input in an alternative domain. It seems that the properties satisfied by traditional attribution method generalize to the case where attribution is done on transformations of the input data. This work thus opens a promising direction in the systematic study of generalized attribution spaces, or transformed domains where gradients can be computed more expressively while preserving fidelity to the original input.

\section{Complements on the quantitative evaluation}\label{sec:appendix-additional-cases}

\subsection{Benchmark construction}\label{sec:benchmark-details}


\paragraph{Images.} Our models' parameterizations for benchmarking  \wam~on images are the following:

\begin{itemize}
    \item ResNet \cite{he2016deep}: unless specified otherwise, we consider the {\tt resnet18} variant,
    \item EfficientNet \cite{tan19a}: we consider the {\tt tf\_efficientnet\_b0.ns\_jft\_in1k} variant,
    \item ConvNext \cite{liu2022convnet}: we consider the {\tt convnext\_small.fb\_in22k\_ft\_in1k\_384} variant,
    \item DeiT \cite{touvron2021training}: we consider the {\tt deit\_tiny\_patch16\_224.fb\_in1k}
\end{itemize}

All models are retrieved from the PyTorch Image Models \citep{rw2019timm} repository. We load the model with the pre-trained weights and directly evaluate them on the validation set of ImageNet. We implement the SmoothGrad, GradCAM, and GradCAM Plus Plus methods ourselves and use the Captum library \citep{kokhlikyan2020captum} for implementing the Intergrated Gradients and the Saliency methods. 

\paragraph{Audio.} For audio, we use the same technical infrastructure to evaluate our method. We use the CNN classification model of \citet{kumar2018knowledge} as our black-box model to explain. We consider a single model as alternative models \citep{huang2018aclnet, wilkinghoff2021open, lopez2021efficient} are only variations around the same topology. We add 0~dB white noise to the ESC-50 samples using the pseudocode displayed in \Cref{fig:0db}.
\begin{figure}[h]
\centering
\begin{lstlisting}[
  language=python,
  %title={Pseudo-code for adding Gaussian %noise to audio}
  ]
# Input: audio (input audio signal, array of int16)
# Output: noisy_audio (audio with added Gaussian noise, array of int16)

def add_gaussian_noise(audio):
  # Convert the audio to float32 for safe computation
  audio_float = convert_to_float32(audio)
  
  # Calculate RMS (Root Mean Square) of the audio signal
  rms_signal = sqrt(mean(audio_float ** 2))
  
  # Generate Gaussian noise
  noise = random_normal_distribution(mean=0, 
                                    std=1, 
                                    shape=audio_float.shape)
  
  # Calculate RMS of the generated noise
  rms_noise = sqrt(mean(noise ** 2))
  
  # Scale noise to have the same RMS as the audio signal
  noise = noise * (rms_signal / rms_noise)
  
  # Add noise to the audio signal
  noisy_audio_float = audio_float + noise
  
  # Clip the noisy audio to ensure it stays within the int16 range
  noisy_audio_clipped = clip(noisy_audio_float, -32768, 32767)
  
  # Convert the clipped noisy audio back to int16
  noisy_audio = convert_to_int16(noisy_audio_clipped)
  
  return noisy_audio
\end{lstlisting}
\caption{\textbf{Pseudo-code for adding Gaussian noise to audio.}}
\label{fig:0db}
\end{figure}

\paragraph{Volumes.} For volumes, we consider the LATEC benchmark \cite{klein2024latec} and evaluate our method on two datasets of MedMNIST \cite{yang2023medmnist}: AdrenalMNIST3D and VesselMNIST. We carry out our evaluation over the complete test set. We included \wam~ as an additional feature attribution method for fair comparisons with existing methods.

\subsection{Definition of the evaluation metrics}\label{sec:metric_ins_del_def}

\paragraph{Insertion and Deletion.}
Insertion and deletion are two evaluation metrics proposed by \citet{petsiuk2018rise}. These metrics are "area-under-curve" (AUC) metrics, which report the change in the predicted probability for the image class when inserting (resp. removing) meaningful information highlighted by the attribution method. \citet{petsiuk2018rise} initially defined this metric for images, where the important features correspond to pixels. We expand this metric to wavelet coefficients, thus enabling computation of the Insertion and the Deletion for any modality.

Both metrics consider an input in a baseline state. Insertion consists in adding the most important features identified by the attribution method. Formally, at step $k$ with a subset $u_k$ of important features (which correspond in our case in wavelet coefficients) at step $k$, 

\begin{equation}
    \text{Insertion}^{(k)} = \f(\x_{[\x_{-u_k}=\x_0]}),    
\end{equation}
where $\f(.)$ is the predicted probability and $-u$ denotes the complementary set of $u$. We add features by decreasing order of importance and for $k_1\le k_2, u_{k_1}\subseteq u_{k_2}$, i.e., we gradually add more and more features until we eventually recover the full input $\x$.

The deletion performs the opposite operation where we start from a plain input with all variables and we gradually set features in the baseline state $\x_0$, from the most important to the less important. We have

\begin{equation}
\text{Deletion}^{(k)} = \f(\x_{[\x_{u_k}=\x_0]}).    
\end{equation}

Finally, for the insertion and the deletion, we measure the AUC, which is comprised between 0 and 1. Given $K$ steps, the Insertion score of the feature attribution $\explainer$ for the model $\f$ is 
\begin{equation}
    \text{Ins}(\f,\explainer)=\sum_{k=1}^K\text{Insertion}^{(k)} \Delta_{k}=\sum_{k=1}^K \f(\x_{[\x_{-u_k}=\x_0]}) \Delta_{k},
\end{equation}
where $\Delta_{k}$ is the width between two subintervals. The computation is analogous for $\text{Del}(\f,\explainer)$.

If the attribution method picks relevant features, then only including them (resp. only removing them) should result in a large increase (resp. large decrease) in the predicted probability. Therefore, the AUC should be close to 1 for the insertion and close to 0 for the deletion. We set the baseline to $\x_0=0$. 

\paragraph{$\mu$-Fidelity.} The $\mu$-Fidelity is a correlation metric. It measures the correlation between the decrease of the predicted probabilities when features are in a baseline state and the importance of these features. We have
\begin{equation}
\mu\textrm{-Fidelity} = \underset{u \subseteq \{1, ..., K\},\atop |u| = d}{\operatorname{Corr}}\left( \sum_{i \in u} \g(\x_i)  , \f(\x) - \f(\x_{\x_{u} = \x_0})\right),
\end{equation}
where $\g$ is the explanation function (i.e., the explanation method), which quantifies the importance of the set of features $u$.

\paragraph{Faithfulness on Spectra.} 
 The Faithfulness on Spectra\textbf{ (FF}, \citeauthor{parekh2022listen}, \citeyear{parekh2022listen}{\bf )} measures how important is the generated interpretation for a classifier. The metric is calculated by measuring the drop in class-specific logit value $\f(\x)_c$ when the masked out portion of the interpretation mask $\m_{\explainer}$ is input to the classifier. This amounts to calculating,

\begin{equation}
\text{FF}_{\x} = \f(\x)_c - \f\left(\x \odot (\mathbf{1} - \m_{\explainer})\right)_c.    
\end{equation}
It should be noted that this strategy to simulate removal may introduce artifacts in the input that can affect the classifier's output unpredictably. Also, interpretations on samples with poor fidelity can lead to negative $\text{FF}_{\x}$. These observations point to this metric's potential instability and outlying values. Thus, we report the final faithfulness of the system as the median of $\text{FF}_{\x}$ over the test set, denoted by FF. A positive FF would signify that interpretations are faithful to the classifier.

\paragraph{Input Fidelity.} The Input Fidelity \textbf{(Fid-In}, \citeauthor{paissan2023posthoc}, \citeyear{paissan2023posthoc}{\bf )} measures if the classifier outputs the same class prediction on the masked-in portion of the input image. It is defined as,

\begin{equation}
\text{Fid-In} = \frac{1}{n} \sum_{i=1}^{n} \mathbb{I}\left[\arg\max_{c} \f(\x_i)_c = \arg\max_{c} \f_c\left(\x_i \odot \m_{\explainer}\right)\right],   
\end{equation}
where $\mathbb{I}$ denotes the indicator function and, again, larger values are better.

\subsection{Additional evaluation results}\label{sec:additional-quanti}

This section provides additional quantitative results. For audio, we evaluated \wam~using additional metrics and on a noisy version of the dataset ESC-50 and SONY-UST, as done by \citet{parekh2022listen}. We also evaluated \wam~by computing explanations from the mel-spectrogram and the wavelet coefficients of the input audio. For images, we evaluated our method using different model topologies: ResNet, EfficientNet, Data Efficient Transformer, and ConvNext. We also included additional metrics and additional methods. For volumes, we evaluated the performance of our method on MNIST3D using a ResNet3D backbone and we included two additional datasets, AdrenalMNIST and VesselMNIST, and three model topologies: ResNet3D, EfficientNet3D, and 3D Former. We also explored the effect of the number of decomposition levels on performance. Across all these variants, \wam~demonstrated solid performance compared to the baselines, showing its robustness and versatility across use cases and datasets.

\subsubsection*{Audio}

\Cref{tab:res-sounds} presents the full evaluation results for all audio metrics. We report the results for the clean and noisy dataset, as done by \citet{parekh2022listen}. For \wam, we consider two variants: when the explanations are computed on the mel-spectrogram of the input audio and when it is computed from the wavelet coefficients. We can see that when the explanations is computed from the wavelet coefficients, accuracy is higher. We report the results on two datasets, ESC-50 \cite{piczak2015esc} and SONY-USC \cite{cartwright2019sonyc}


\begin{table}[H]
\caption{{\bf Evaluation scores} of \wam~and comparison with baselines on 400 audio samples from ESC-50 (fold 1) and on the validation set of the SONY-UST dataset. The column "-" indicates that the samples are unaltered. The column "+WN" indicates that the samples have 0~dB Gaussian white noise. {\bf Bolded} results are the best and \underline{underlined} values are the second best.}
\label{tab:res-sounds}
\begin{center}
  \resizebox{\textwidth}{!}{
\begin{tabular}{c r cc cc cc cc cc}
\toprule
& {\it Metric} & \multicolumn{2}{c}{{\it Faithfulness} ($\uparrow$)} & \multicolumn{2}{c}{{\it Insertion} ($\uparrow$)}& \multicolumn{2}{c}{{\it Deletion}($\downarrow$)} & \multicolumn{2}{c}{{\it FF} ($\uparrow$)}& \multicolumn{2}{c}{{\it Fid-In} ($\uparrow$)} \\
&& {\it -} & {\it +WN} & - & {\it +WN} & - & {\it +WN} & - & {\it +WN} & - & {\it +WN}\\
\cmidrule{2-12}
\multirow{8}{*}{\rotatebox{90}{{\it ESC-50}}}&IntegratedGradients& {\bf 0.264}   & {\bf 0.310} &0.267 &0.312       & {\bf 0.047}&{\bf 0.045}      & {\bf 0.207}  &{\bf 0.207}    & 0.220&0.225 \\ 
&GradCAM & 0.072 &0.073     & 0.274&0.274      & 0.201&0.201      & 0.137 &0.135     & 0.517&0.542    \\
&Saliency                           & 0.066 & 0.065     & 0.220 &0.221     & 0.154 &0.156     & 0.166&0.168      & 0.253&0.245  \\
&SmoothGrad& 0.184  &0.184    & 0.251&0.251      & \underline{0.067}  &\underline{0.067}    & \underline{0.193}&\underline{0.194}      & 0.177&0.175  \\
\cmidrule{2-12}
&\wamsg~(ours, wavelet)                 & \underline{0.197}  &\underline{0.205}    & {\bf 0.449} &{\bf 0.452}     & 0.252 &0.246     & 0.132 &0.130     & {\bf 0.718} &{\bf 0.690}  \\
&\wamig~(ours, wavelet)                 & 0.176& 0.182   &\underline{0.436} & \underline{0.442} & 0.260  & 0.261& 0.118   & 0.124  &\underline{0.652}   & \underline{0.657}  \\
\cmidrule{3-12}
&\wamsg~(ours, melspec) & 0.009&0.007      & 0.169 &0.166   &0.159     & 0.161  &0.152    & 0.149   & 0.117  &0.122  \\
&\wamig~(ours, melspec) & 0.000 &0.004     & 0.168 &0.171     & 0.168  &0.167    & 0.149&0.149      & 0.105&0.128  \\
\midrule
\multirow{8}{*}{\rotatebox{90}{{\it SONY-UST}}}&IntegratedGradients& {\bf 0.151} & {\bf 0.151} & {\bf 0.732} & {\bf 0.732} & 0.581 & 0.582 & -0.122 & -0.121 & 0.000 & 0.002 \\ 
&GradCAM            & 0.027 & 0.026 & 0.600 & 0.598 & \underline{0.573} & \underline{0.571} & -0.004 & -0.004 & 0.800 & 0.774    \\
&Saliency           & 0.000 & 0.003 & 0.706 & 0.708 & 0.706 & 0.705 & -0.224 & -0.225 & 0.000 & 0.002 \\
&SmoothGrad         & \underline{0.115} & \underline{0.116} & 0.682 & 0.682 & {\bf 0.567} & {\bf 0.566} & -0.098 & -0.098 & 0.000 & 0.002 \\
\cmidrule{2-12}
&\wamsg~(ours, wavelet)  &-0.076 & -0.076 & 0.598 & 0.600 & 0.674 & 0.675 & \underline{0.029} & \underline{0.026} & {\bf 0.882} & {\bf 0.878}  \\
&\wamig~(ours, wavelet)  & -0.075 & -0.080 & 0.591 & 0.591 & 0.666 & 0.671 & {\bf 0.037} & {\bf 0.036} & \underline{0.866} & \underline{0.840}  \\
\cmidrule{3-12}
&\wamsg~(ours, melspec) & 0.003 & 0.010 & 0.677 & 0.679 & 0.674 & 0.669 & -0.188 & -0.183 & 0.000 & 0.002  \\
&\wamig~(ours, melspec) &0.028 & 0.023 & \underline{0.728} & \underline{0.723} & 0.700 & 0.700 & -0.202 & -0.200 & 0.000 & 0.000  \\
\bottomrule
\end{tabular}
}
  
\end{center}
\end{table}

\subsubsection*{Images}

For images, we report the evaluation results across different model topologies and with additional methods. On a ResNet, we also compare our method with more recent feature attribution methods. All explanations are computed on 1,000 samples from the validation set of ImageNet.

\paragraph{Faithfulness and $\mu$-Fidelity.}

\Cref{tab:faithfulness} reports the evaluation results using different metrics across a wide range of topologies. We also include additional methods such as GradCAM ++  \cite{selvaraju2017gradcam} and Guided Brackpropagation \cite{springenberg2014striving}. We can see that in terms of Faithfulness, our method outperforms alternative approaches over all model topologies. On the other hand, the performance measured by the $\mu$-Fidelity aligns \wam~with existing approaches. We can see that the good results reported in \Cref{tab:faithfulness} are mostly driven by our method's very good results in Insertion. We can see that \wam~systematically outperforms the other methods in both Insertion and Deletion.

\begin{table}[H]
\caption{{\bf Evaluation results of \wam~for images.} We report the {\bf Faithfulness} \citep{muzellec2023gradient}, {\bf $\mu$-Fidelity} \cite{bhatt2021evaluating} and {\bf Insertion} and {\bf Deletion} \citep{petsiuk2018rise} scores obtained on 1,000 images from the validation set of ImageNet and for different model architectures. {\bf Bolded} results are the best and \underline{underlined} values are the second best.}
\label{tab:faithfulness}
\begin{center}
\begin{tabular}{c r ccccc}
\toprule
&{\it Model} & \textit{ResNet} & \textit{ConvNext} & \textit{EfficientNet} & \textit{DeiT} & Mean \\
\cmidrule{2-7}
\multirow{8}{*}{{\it Faithfulness ($\uparrow$)}}&Saliency & 0.025 & 0.032 & 0.008 & 0.038 & 0.025 \\
&Integrated Gradients & 0.000 & 0.001 & 0.000 & 0.003 & 0.001 \\
&GradCAM & 0.134 & 0.072 & 0.061 & 0.162 & 0.107 \\
&GradCAM++ & 0.184 & 0.055 & 0.050 & 0.044 & 0.083 \\
&SmoothGrad & 0.023 & 0.000 & 0.010 & 0.004 & 0.009 \\
&Guided-Backpropagation & 0.001  & 0.001 & 0.001 & 0.000 & 0.000\\
\cmidrule{2-7}
&\wamsg~(ours) & {\bf 0.438} & {\bf 0.334} & {\bf 0.350} & {\bf 0.423} & {\bf 0.386} \\
&\wamig~(ours) & \underline{0.344} & \underline{0.359} & \underline{0.370} & \underline{0.420} & \underline{0.373} \\
\midrule
\multirow{8}{*}{{\it $\mu$-Fidelity ($\uparrow$)}}&Saliency & 0.154 & 0.186 & 0.180 & 0.195 & 0.179 \\
& Integrated Gradients & \textbf{0.228} & 0.223 & {\bf 0.219} & \textbf{0.241} & \textbf{0.228} \\
& GradCAM & 0.141 & \underline{0.216} & 0.149 & 0.151 & 0.164 \\
& GradCAM ++ & 0.135 & 0.212 & 0.141 & 0.222 & 0.178 \\
& SmoothGrad & \underline{0.220} & \underline{0.227} & 0.211 & \underline{0.230} & 0.222 \\
& Guided Backpropagation & 0.216&0.229&{\bf 0.234}&	0.226&	\underline{0.226}\\
\cmidrule{2-7}
& \wamsg~(ours) & 0.215 & 0.208 & 0.213 & 0.216 & 0.213 \\
& \wamig~(ours) & 0.170 & 0.166 & 0.165 & 0.182 & 0.171 \\    
\midrule

\multirow{8}{*}{{\it Insertion ($\uparrow$)}} & Saliency & 0.134 & 0.381 & 0.148 & 0.194 & 0.214 \\
& Integrated Gradients & 0.087 & 0.305 & 0.113 & 0.095 & 0.150 \\
& GradCAM & 0.413 & 0.495 & 0.364 & 0.352 &  0.406 \\
& GradCAM ++ & \underline{0.452} & 0.562 & 0.350 & 0.313  & 0.419 \\   
& SmoothGrad & 0.106 & 0.253 & 0.129 & 0.108 & 0.149 \\
& Guided Backpropagation & 0.090	&0.332	&0.117	&0.093	&0.158 \\
\cmidrule{2-7}
& \wamsg~(ours) & {\bf 0.557} & {\bf 0.606} & {\bf 0.447} & {\bf 0.546} & {\bf 0.539} \\
& \wamig~(ours) & 0.422 & \underline{0.557} & \underline{0.419} & \underline{0.492} & \underline{0.473} \\
\toprule
\multirow{8}{*}{{\it Deletion ($\downarrow$)}} & Saliency & 0.109 & 0.349 & 0.140 & 0.156 & 0.189 \\
& Integrated Gradients & 0.087 & 0.304 & 0.113 & \underline{0.092} & \underline{0.149} \\
& GradCAM & 0.279 & 0.423 & 0.303 & 0.190 & 0.299 \\
& GradCAM ++ & 0.268 & 0.507 & 0.300 & 0.269 & 0.336 \\
& SmoothGrad & \underline{0.083} & \underline{0.253} & 0.119 & 0.104 & 0.140 \\
& Guided Backpropagation & 0.089	&0.331	&0.116	&0.092&	0.157\\
\cmidrule{2-7}
& \wamsg~(ours) & 0.119 & 0.272 & \underline{0.097} & 0.123 & 0.153 \\
& \wamig~(ours) & {\bf 0.078} & {\bf 0.198} & {\bf 0.049} & {\bf 0.072} & {\bf 0.099} \\

    \bottomrule

\end{tabular}
\end{center}
\end{table}

\paragraph{Additional methods.}

\Cref{tab:more-recent-metrics} compares \wam~with more recent feature attribution methods, considering a ResNet as a model backbone. We compare \wam~against LayerCAM \cite{jiang2021layercam}, Guided Backpropagation \cite{springenberg2014striving}, LRP \cite{binder2016layer} and SRD \cite{han2024respect}.

\begin{table}[h]
    \centering
   
   \caption{\textbf{Evaluation of \wam~on ImageNet} with recent feature attribution methods using a ResNet backbone. Best results are {\bf bolded}, second best \underline{underlined}.\label{tab:more-recent-metrics}}
    \begin{tabular}{r c c c c}
\toprule
\textit{Metric}        & \textit{Faithfulness} ($\uparrow$) & \textit{Insertion} ($\uparrow$) & \textit{Deletion} ($\downarrow$)& \textit{$\mu$-Fidelity} ($\uparrow$) \\ 
\midrule
Guided Backprop        & 0.001                 & 0.090              & \underline{0.089}             & {\bf 0.216}            \\ 
LayerCAM               & 0.139                 & 0.432              & 0.293             & 0.143                \\ 
LRP                    & 0.039                 & 0.277              & 0.238             & 0.126            \\ 
SRD                    & 0.002                 & 0.102              & 0.101             & 1.151            \\ 
\midrule
\wamig~(ours)         & {\bf 0.438}                 & {\bf 0.557}              & 0.119             & \underline{0.215}            \\ 
\wamsg~(ours)         & \underline{0.344}                 & \underline{0.344}              & {\bf 0.078}             & 0.170            \\ 
\bottomrule
\end{tabular}

\end{table}

\newpage
\subsubsection*{Volumes}

\paragraph{Complementary results on MNIST3D.} In \Cref{table_3DMNIST} we evaluate our method against Saliency \cite{zeiler2013visualizing}, GradCAM \cite{selvaraju2017gradcam}, GradCAM++ \cite{chattopadhay2018grad}, IntegratedGradients \cite{sundararajan2017axiomatic}, GuidedBackpropagation \cite{springenberg2014striving} and SmoothGrad \cite{smilkov2017smoothgrad} on the dataset MNIST3D. We report the Insertion, the Deletion, and the Faithfulness. Overall, \wam~performs as well as or better than the other methods.

\begin{table}[h]
\caption{\textbf{Evaluation scores (3D) on the full test set of MNIST3D.} Best results are {\bf bolded}, second best \underline{underlined}.\label{table_3DMNIST}}
\label{}
\begin{center}
\begin{tabular}{r c c c}
\toprule
{\it Metric} & {\it Faithfulness} ($\uparrow$) & {\it Insertion} ($\uparrow$)& {\it Deletion}($\downarrow$) \\

\midrule
Saliency & -0.166 & \textbf{0.108} & 0.274 \\
GradCAM & -0.225 & 0.106 & 0.330 \\
GradCAM++ & -0.250 & 0.104 & 0.354 \\
IntegratedGradients & -0.123 & \textbf{0.108} & 0.232 \\
GuidedBackprop & -0.110 & 0.108 & 0.218 \\
SmoothGrad & -0.190 & \underline{0.107} & 0.297 \\
\midrule
\wamsg~(ours, $J=2$) & -0.153 & \textbf{0.108} & 0.261 \\
\wamig~(ours, $J=2$) & {\bf -0.094} & 0.106 & \textbf{0.200} \\
\cmidrule{2-4}
\wamsg~(ours, $J=1$) & -0.152 & \textbf{0.108} & 0.260 \\
\wamig~(ours, $J=1$) & \underline{-0.096} & \underline{0.107} & \underline{0.203} \\

\bottomrule
\end{tabular}
\end{center}
\end{table}

\newpage
\paragraph{Evaluation on MedMNIST.} MedMNIST \citep{yang2023medmnist} is a collection of standardized biomedical 2D and 3D images. We evaluated \wam~on two datasets from MedMNIST, AdrenalMNIST and VesselMNIST. \Cref{tab:adrenalfull} shows the evaluation results of \wam~on AdrenalMNIST and VesselMNIST using different model topologies. 

\begin{table}[h]
\caption{{\bf Evaluation scores (3D) on the full test set of AdrenalMNIST and VesselMNIST}. Best results are {\bf bolded}, second best \underline{underlined}.}
\label{tab:adrenalfull}
\begin{center}
  \resizebox{0.9\textwidth}{!}{
\begin{tabular}{c r c c c c c c c c c c}
\toprule
& {\it Model} & \multicolumn{3}{c}{\textit{ResNet3D}} & \multicolumn{3}{c}{\textit{EfficientNet3D}} & \multicolumn{3}{c}{\textit{3D Former}} \\
& {\it Metric} & {\it Faith} ($\uparrow$) & {\it Ins} ($\uparrow$) & {\it Del} ($\downarrow$) & {\it Faith} ($\uparrow$) & {\it Ins} ($\uparrow$) & {\it Deletion} ($\downarrow$) & {\it Faith} ($\uparrow$) & {\it Ins} ($\uparrow$) & {\it Del} ($\downarrow$) \\
\midrule
\multirow{12}{*}{\rotatebox{90}{{\it AdrenalMNIST}}}&Saliency & -0.136 & \underline{0.633} & 0.769 & -0.085 & 0.679 & 0.764 & -0.027 & 0.715 & 0.742 \\
&GradCAM & -0.195 & 0.529 & 0.724 & -0.052 & 0.693 & 0.745 & -0.065 & 0.679 & 0.744 \\
&GradCAM++ & -0.137 & 0.531 & \underline{0.668} & -0.076 & 0.695 & 0.771 & -0.091 & 0.656 & 0.748 \\
&IntegratedGradients & -0.204 & 0.531 & 0.735 & -0.056 & 0.705 & 0.761 & -0.077 & 0.666 & 0.743 \\
&Guided Backpropagation & \underline{-0.132} & {\bf 0.585} & 0.717 & 0.003 & 0.683 & 0.680 & -0.004 & 0.685 & 0.689 \\
&SmoothGrad & {\bf 0.115} & {\bf 0.663} & 0.548 & -0.070 & 0.686 & 0.756 & -0.051 & 0.680 & 0.731 \\
\cmidrule{2-11}
&\wamsg~(ours, $J=2$) & -0.279 & 0.467 & 0.746 & \underline{0.049} & 0.666 & \underline{0.617} & \underline{0.070} & \underline{0.718} & \underline{0.648} \\
&\wamig~(ours, $J=2$) & -0.224 & 0.407 & {\bf 0.631} & {\bf 0.086} & \underline{0.684} & {\bf 0.598} & {\bf 0.106} & {\bf 0.719} & {\bf 0.613} \\
\cmidrule{3-11}
&\wamsg~(ours, $J=1$) & -0.273 & 0.474 & 0.746 & 0.026 & 0.661 & 0.635 & 0.050 & 0.717 & 0.667 \\
&\wamig~(ours, $J=1$) & -0.224 & 0.478 & 0.702 & 0.013 & {\bf 0.699} & 0.686 & 0.026 & 0.697 & 0.671 \\

\midrule  
\multirow{12}{*}{\rotatebox{90}{{\it VesselMNIST}}}&Saliency & -0.068 & 0.833 & 0.901 & -0.747 & {\bf 0.131} & 0.878 & -0.099 & 0.762 & 0.861 \\
&GradCAM & -0.095 & 0.789 & 0.884 & -0.728 & 0.126 & 0.854 &-0.065 & 0.804 & 0.869 \\
&GradCAM++ & 0.011 & 0.871 & 0.860 & -0.725 & 0.124 & 0.849 & -0.066 & 0.797 & 0.863 \\
&IntegratedGradients &  -0.116 & 0.771 & 0.887 & -0.743 & 0.127 & 0.870 &  -0.060 & 0.808 & 0.868 \\
&GuidedBackprop &  -0.229 & 0.654 & 0.883 &-0.717 & 0.121 & 0.838 &  0.002 & \underline{0.843} & 0.841 \\
&SmoothGrad & {\bf 0.137} & {\bf 0.861} & {\bf 0.724} &  -0.674 & \underline{0.130} & 0.804 &  {\bf 0.045} & {\bf 0.851} & 0.806 \\
\cmidrule{2-11}
&\wamsg~(ours, $J=2$) &  -0.054 & \underline{0.848} & 0.902 &  \underline{-0.674} & 0.126 & \underline{0.800} & \underline{0.020} & 0.798 & {\bf 0.778} \\
&\wamig~(ours, $J=2$) & -0.064 & 0.756 & \underline{0.820} &  {\bf -0.630} & 0.126 & {\bf 0.756} &  0.010 & 0.793 & \underline{0.783} \\
\cmidrule{3-11}
&\wamsg~(ours, $J=1$) & -0.070 & 0.832 & 0.902 & -0.692 & 0.128 & 0.820 &  0.001 & 0.799 & 0.798 \\
&\wamig~(ours, $J=1$) &  -0.069 & 0.787 & 0.856 &  -0.691 & 0.128 & 0.819 & -0.070 & 0.764 & 0.834 \\
\bottomrule
\end{tabular}
  }
\end{center}
\end{table}


\section{Additional use cases}\label{sec:appendix-additional}

\subsection{Revisiting meaningful perturbations}

\paragraph{Meaningful perturbations in the wavelet domain.}

Meaningful perturbations \cite{fong2017meaningful,fong2019extremal} identify the most relevant regions of an input for a model’s prediction by learning an optimal mask that hides as little of the image as possible while significantly altering the model's output. This is done by optimizing a mask through gradient-based updates to minimize the classifier's confidence while maintaining a smoothness constraint. The result highlights the most important features in a way that is more structured and localized than saliency maps. We revisit this framework by proposing to recast the problem in the wavelet domain as a more suitable space for optimization. The wavelet domain inherently captures spatial and spectral information, providing a natural structure for producing meaningful and interpretable solutions. Specifically, we solve the following optimization problem:

\begin{equation}
\m^\star = \argmin_{\m \in [0,1]^{|\mathcal{X}|}} \f_c\left(\mathcal{W}^{-1}(\z \odot \m)\right) + \alpha \|\m\|_1,    
\end{equation}

where $\f_c$ represents the classification score, $\mathcal{W}^{-1}$ is the inverse wavelet transform, $\z$ is the wavelet transform of the input signal, $\odot$ denotes element-wise multiplication, and $\alpha$ controls the sparsity of the mask $\m$. To solve this problem, we initialize the mask as $\m_0 = \mathbf{1}$ (i.e., a mask that retains all coefficients) and iteratively update it using gradient descent:

\begin{equation}
    \m_{i+1} = \m_i - \eta \nabla_{\m_i} \left( \f_c\left(\mathcal{W}^{-1}(\z \odot \m_i)\right) + \alpha \|\m_i\|_1 \right),
\end{equation}

where $\eta$ represents the step size and the gradient is taken with respect to $\m_i$. The optimization process continues until convergence is achieved. We call the resulting image the {\it minimal } image. In practice, we employ the Nadam~\citep{dozat2016incorporating} optimizer, which combines the benefits of Nesterov acceleration and Adam optimization.
Our approach consistently produces masks with controllable sparsity levels up to 90\%, meaning that 90\% of the wavelet coefficients are zeroed out, \textit{while} maintaining a classification score comparable to or better than the original prediction. This high sparsity level suggests that the model's decision may rely on a minimal subset of wavelet coefficients.

\begin{figure}[h]
    \centering
    \includegraphics[width=.8\linewidth]{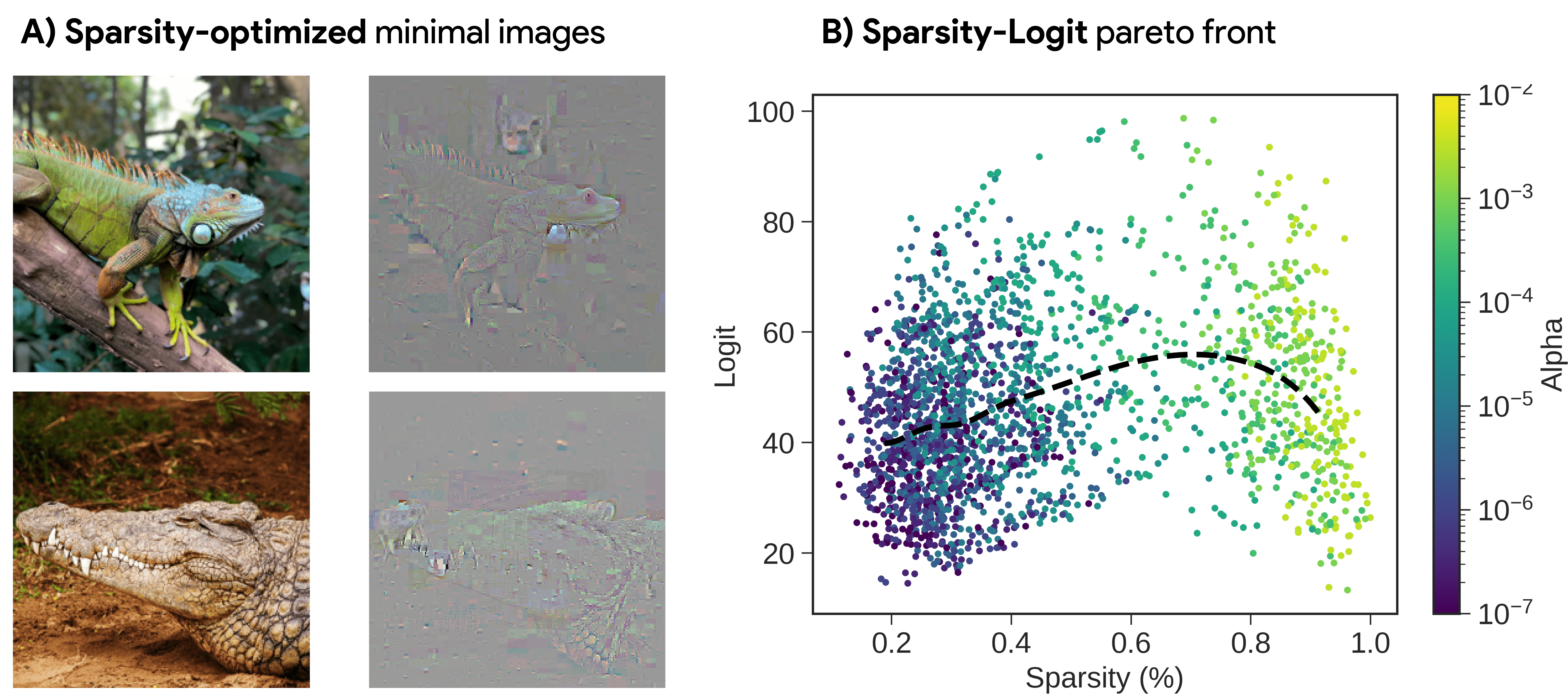}
    \caption{\textbf{A) Sparsity-optimized minimal images.} We revisit meaningful perturbation by optimizing the sparsity of the wavelet transform using masking, instead of optimizing the mask in pixel space. The displayed examples show that the resulting minimal images reveal the model's reliance on textures. \textbf{B) Sparsity Pareto front.} As $\alpha$ increases, the sparsity of the wavelet coefficients increases (x-axis), but beyond a certain point, too much information is lost and the logit score drops to zero. However, we observe that many components can be removed before adversely affecting the model. Results are averaged across 1,000 images optimized for 500 steps and for $\alpha$ ranging in $[0,100]$ for each image.}
    \label{fig:minimal-images-main}
\end{figure}

Traditional meaningful perturbation methods~\citep{fong2017meaningful} focus on spatial localization, identifying clusters of pixels that answer the question of \emph{where} the important features are located. However, this spatial emphasis alone provides a limited understanding of the underlying data structure. In contrast, by operating in the wavelet domain, our method captures both the \emph{what} -- the relevant scales -- and the \emph{where} -- their spatial locations. This dual information enriches the explanation by revealing the location and the nature of the features influencing the model's decision. These results also show that we qualitatively recover the results from \citet{kolek2023explaining}.

\Cref{fig:minimal-images-main} illustrates that minimal images derived using \wam~recover the texture bias of the vanilla ResNet models trained on ImageNet, highlighted by~\citet{geirhos2018imagenet}. The examples demonstrate how the model relies heavily on texture information, which is effectively isolated through our wavelet-domain optimization.


\paragraph{Effect of the regularizer on the sparsity of the images.} \Cref{fig:optim_wavelet_alpha} illustrates the effect of the parameter $\alpha$ on the sparsity of the minimal images. We can see that the stronger $\alpha$, the sparser the image, but at the expense of a higher logit value. We can see that the first components of the image that disappear are the background, then the colors, and eventually the shape of the target class.

\begin{figure}[h]
    \centering
    \includegraphics[width=1.0\linewidth]{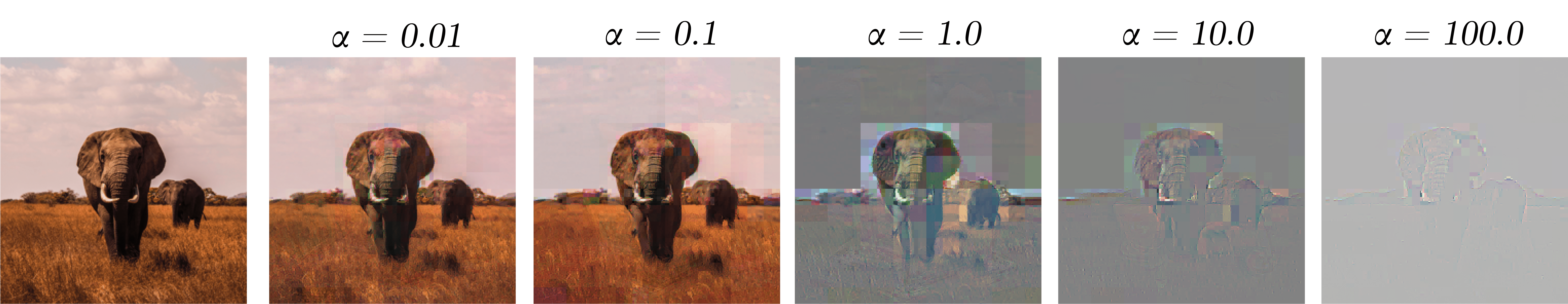}
    \caption{{\bf Effect of varying values of $\alpha$ on the sparsity of the minimal images.}}
    \label{fig:optim_wavelet_alpha}
\end{figure}

\paragraph{Minimal images and applications.} \Cref{fig:optim_wavelet} presents additional examples of minimal images. We can see that the color information does not appear as important for maximizing the model's prediction. On the other hand, the texture and edge information are essential. It would be interesting to replicate this method on a shape-biased model such as those proposed by \citet{chen2020shape} or \citet{geirhos2018imagenet} to see whether the behavior remains the same or not.

\begin{figure}[h]
    \centering
    \includegraphics[width=0.9\linewidth
    ]{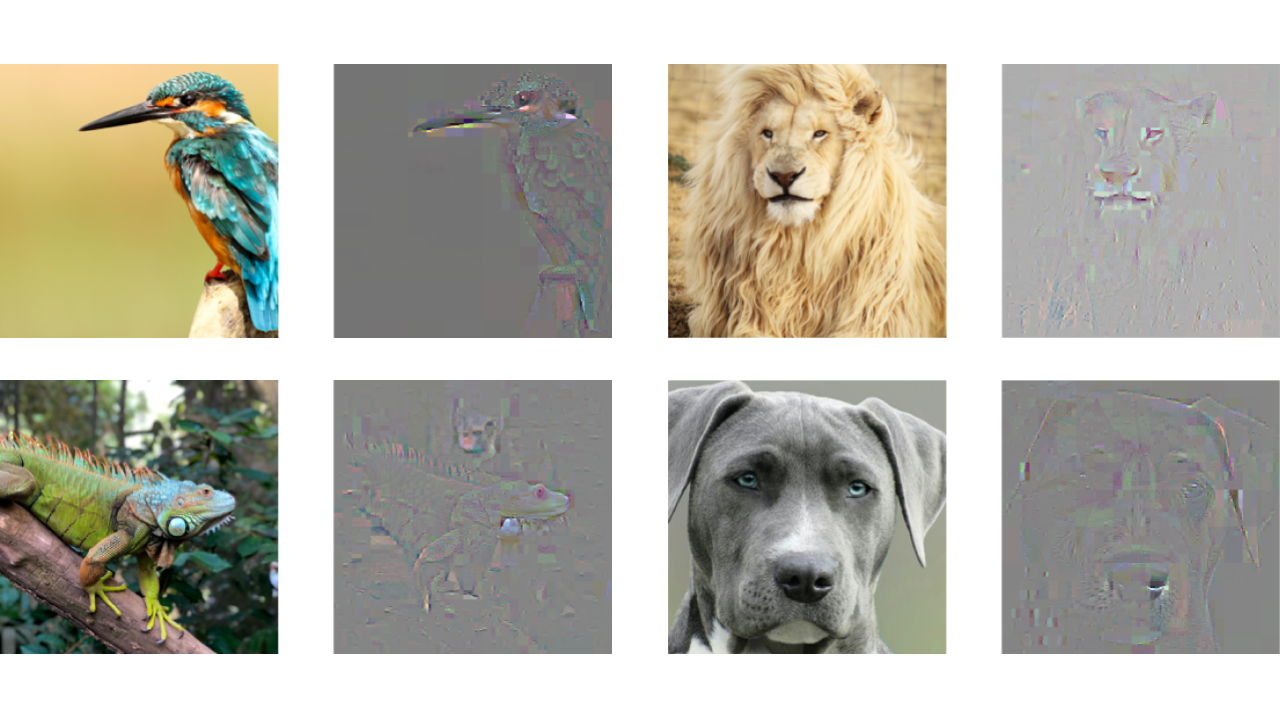}
    \caption{{\bf Additional examples of minimal images.}}
    \label{fig:optim_wavelet}
\end{figure}

\subsection{Multi-class classification}

\paragraph{Evaluation in a multi-class setting.} We evaluate \wam~in a multi-class setting to assert whether our method is still informative when an image depicts different objects. To evaluate our method, we rely on the Pointing Game \citep{zhang2018top} benchmark. The Pointing Game assesses the spatial precision of saliency or attribution maps. For each image, the location of the maximum activation in the saliency map (the ``point'') is compared against the ground-truth object location. If the point lies within the ground-truth bounding box (or segmentation mask), it is considered a ``hit''; otherwise, it is a ``miss''.

The final accuracy is computed as:
\[
\text{Pointing Game Accuracy} = \frac{\# \text{Hits}}{\# \text{Hits} + \# \text{Misses}}.
\]

This metric focuses on whether the most confident localization prediction corresponds to the true object location, without requiring full object delineation. We evaluate \wam~alongside competing methods (GradCAM, GradCAM++, and SmoothGrad) on the PASCAL VOC 2012 test set~\cite{everingham2010pascal}, a widely used benchmark for visual recognition tasks. The dataset contains 20 object classes and 1456 test images. Each image is annotated with object bounding boxes and class labels, making it suitable for evaluating weakly-supervised localization via the Pointing Game.

\begin{table}[h]
    \centering
        \caption{{\bf Pointing game experiment results}. Evaluation carried out on the test set of the PASCAL VOC 2012 test split. Best results are {\bf bolded} and second best \underline{underlined}.}
    \label{tab:pointing_game}
    \begin{tabular}{r c c c c}
    \toprule
    {\it Model} & {\it ResNet} & {\it EfficientNet} & {\it ConvNext} & {\it VGG} \\
    \midrule
    GradCAM & 0.476 & 0.454 &0.528 &0.437\\
    GradCAM++ & 0.476 &0.454 &0.528 &0.437\\
    SmoothGrad & 0.429 &0.432 &0.430 &0.430\\
    \cmidrule{2-5}
    \wamsg~(ours)& {\bf 0.606} & \underline{0.584} & \underline{0.584} & \underline{0.538}\\
    \wamig~(ours)& \underline{0.604} & {\bf 0.601} & {\bf 0.632} & {\bf 0.586}\\
    \bottomrule    
    \end{tabular}
\end{table}

\Cref{tab:pointing_game} presents the results: we can see that \wam~outperforms the competing baselines at the Pointing Game, thereby showing its ability to identify the relevant spatial parts of the input image for a given class. This further backs the fact that the wavelet domain is informative in multi-category images and suggests that \wam~ can be adapted to object detectors.

\paragraph{Visualizations.} Once shown that \wam~can effectively distinguish objects in multi-category settings, we can analyze which regions are highlighted as important by \wam~on such images. In \Cref{fig:cats-dogs}, we observe that the \wam~successfully highlights ``cat-related coefficients,'' as the cat's nose and its corresponding regions in the wavelet domain are more active in the rightmost image. Interestingly, although less prominent, parts of the dog's face also remain significant. This suggests that while the spatial separation between the cat and dog is clear in the original image, the distinction becomes less apparent in the wavelet domain.

\begin{figure}[h]
    \centering
    \includegraphics[width=0.5\linewidth]{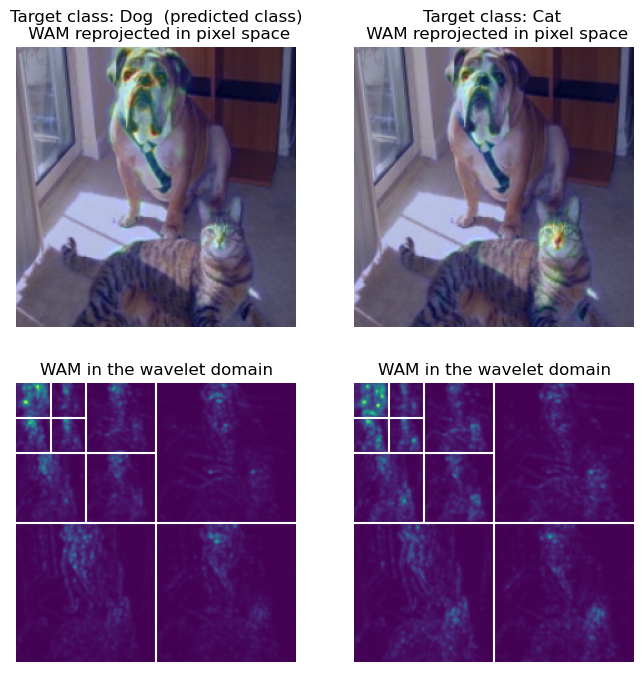}
    \caption{{\bf Illustration of multi-category images.} Example on an image depicting a cat and a dog to highlight that the \wam~points towards the regions corresponding to the target label (Dogs on the right, Cats on the left).}
    \label{fig:cats-dogs}
\end{figure}
\newpage
\subsection{Overlap experiment in audio classification}\label{sec:appendix-audio-overlap}

\Cref{fig:overlapping-experiment} illustrates that  \wam~is able to filter relevant parts corrupted or mixed audio signals. In addition, it highlights the key part of the target signal without requiring any training. \Cref{fig:overlapping-experiment} qualitatively illustrates application of \wam~for audio signals. Herein, we perform an overlap experiment to mix a corrupting audio with a target audio to form the input audio. The model's prediction does not alter after introducing the corruption, and thus, the model is expected to still rely on parts of input audio coming from the target audio for its decision. The interpretation audio in \Cref{fig:overlapping-experiment} generated using top wavelet coefficients provides insights into the decision process and supports this hypothesis. In particular, it almost entirely filters out the corruption audio, and without requiring any training, it also clearly emphasizes key parts of target audio.

\begin{figure}[h]
    \centering
    \includegraphics[width=\linewidth]{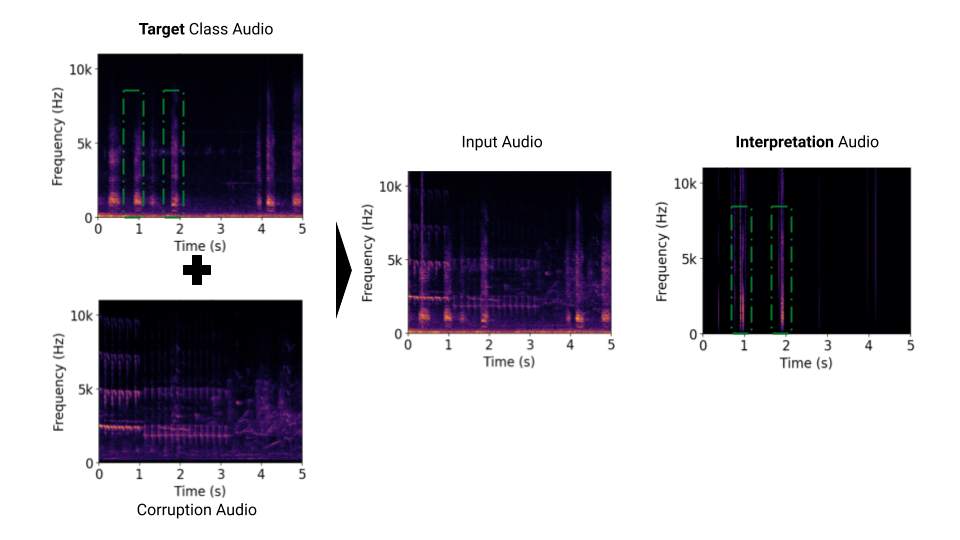}
    \caption{\textbf{Qualitative illustration of \wam~for audio via an Overlap experiment.} The audio of the target class (``Crow'') is mixed with a corrupting audio (``Chirping birds'') to form the input to the classifier. Interpretation audio reconstructed with important wavelet coefficients virtually eliminates signal from the corrupting audio, and also clearly emphasizes parts of the target class audio (indicated with green boxes).}
    \label{fig:overlapping-experiment}
\end{figure}

\newpage
\subsection{Scales meet semantics: the case of remote sensing applications}\label{sec:remote-sensing}

The indexation in scales of the wavelet coefficients finds a natural use case in remote sensing applications \citep{kasmi2023can}, where scales correspond to actual physical objects, as the pixels correspond to distances on the ground. Therefore, given an aerial image whose ground sampling distance is 20 cm/pixel, the 1-2 pixel scale (the finest details), corresponds to details that have a size comprised between 20 to 40 cm. 

On \Cref{fig:remote_sensing}, we consider a case where a classification model is trained on a source domain and deployed on a target domain, thus mimicking situations where a model is used for regular updates. We can see that the OOD image is slightly different from the source image, in the sense that it is less noisy. While one would expect the model's prediction not to change, it turns out that the photovoltaic (PV) panel is no longer recognized on the rightmost image. Attribution in the pixel domain only is not very informative as to why this happens. Turning into the wavelet domain, we can better grasp why the model no longer detects the PV panel. The background noise, located mostly in the finest scales, is no longer present on the OOD image, and thus the PV panel is no longer recognized. \citet{kasmi2025space} discussed how wavelet-based attribution methods help improve the reliability of classifiers deployed in such operational settings.  

\begin{figure}[h]
    \centering
    \includegraphics[width=0.5\linewidth]{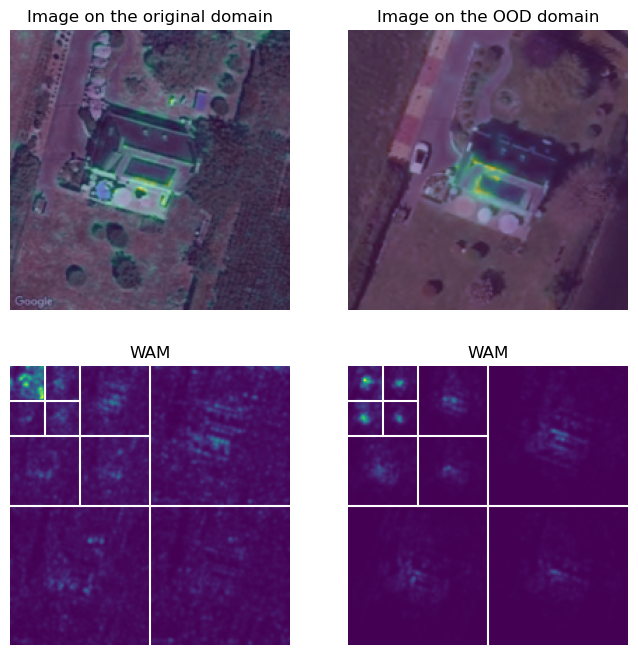}
    \caption{\textbf{Application case of \wam~to remote sensing.} While both images depict the same photovoltaic panel, a model no longer detects the panel when evaluated on images coming from a different distribution ("OOD" or "out-of-distribution" image, compared to images from the source distribution). Attribution in the wavelet domain thanks to the \wam~gives hints as to why the PV system is no longer recognized: on the image of the original domain, which is more noisy, the model relies on noise at the finest scales that are no longer present on the OOD image. Thus suggesting that the representation learned by the model lacks robustness. Based on \citet{kasmi2023can}}
    \label{fig:remote_sensing}
\end{figure}

\end{document}